\documentclass[11pt,letterpaper,onecolumn]{srllab_template} 

\usepackage{amsmath,amsfonts,bm}

\def\eqref#1{equation~\ref{#1}}

\def\1{\bm{1}}

\DeclareMathAlphabet{\mathsfit}{\encodingdefault}{\sfdefault}{m}{sl}
\SetMathAlphabet{\mathsfit}{bold}{\encodingdefault}{\sfdefault}{bx}{n}

\DeclareMathOperator*{\argmax}{arg\,max}

\usepackage[authoryear,round]{natbib}
\usepackage{nicefrac}
\usepackage{wrapfig}
\usepackage{algorithm}
\newtheorem{assumption}{Assumption}
\newtheorem{definition}{Definition}
\newtheorem{proposition}{Proposition}

\newtheorem{property}{Property}

\usepackage[capitalize,noabbrev]{cleveref}
\crefname{assumption}{Assumption}{Assumptions}
\Crefname{assumption}{Assumption}{Assumptions}
\crefname{property}{Property}{Properties}
\Crefname{property}{Property}{Properties}
\usepackage{algorithmic}
\usepackage{multirow}
\usepackage{mathtools}
\graphicspath{{./}}

\crefname{equation}{Eq.}{Eqs.}

\newcommand{\method}{\texttt{DCRL}}
\newcommand{\trl}{\texttt{TRL}}
\newcommand{\td}{\texttt{TD}}

\newcommand{\std}[1]{{\scriptsize $\pm$ #1}}

\newcommand{\NStepTable}[2]{%
\begin{minipage}[t]{0.48\textwidth}
    \captionsetup{width=\linewidth}
    \centering
    \caption{Results with $n=#1$.}
    \label{tab:n#1}
    \resizebox{\linewidth}{!}{%
        \begin{tabular}{@{}llccc@{}}
            \toprule
            \texttt{Environment} & \texttt{Task} & \texttt{TD-#1} & \texttt{TRL-#1} & \texttt{DCRL-#1} \\ \midrule
            #2
            \bottomrule
        \end{tabular}%
    }
\end{minipage}
}

\title{Recursive Value Learning for Long-Horizon Offline Goal-Conditioned RL}

\author{Hyeonseong Jeon\textsuperscript{1} and Youngwoon Lee\textsuperscript{2}\\
\Affilfont\textsuperscript{1}Yonsei University, \textsuperscript{2}Seoul National University\\
\texttt{yeonsumia@snu.ac.kr, youngwoon@snu.ac.kr}}
\reportnumber{}

\newcommand{\eg} {\emph{e.g.}}
\newcommand{\ie} {\emph{i.e.}}

\begin{abstract}
Scaling offline goal-conditioned reinforcement learning (GCRL) to long-horizon tasks is difficult because (1) long-range value learning depends on shorter-range estimates that may still be inaccurate, and (2) max-based value backups can amplify overestimation through repeated propagation. We propose \method~(\texttt{D}ivide-and-\texttt{C}onquer RL), which recursively decomposes each trajectory segment into a balanced binary tree and trains the values from leaves to root. Each parent is therefore updated only after its children, using an exact factorization of the observed route rather than selecting among noisy alternatives. Since this objective learns values along demonstrated routes that are not necessarily optimal, \method\ jointly propagates values across trajectories to discover shorter routes. Thanks to the balanced binary tree, \method\ reduces worst-case bootstrap depth from linear to logarithmic, and this shorter dependency structure empirically corresponds to much slower error accumulation. Across diverse goal-reaching tasks, \method\ substantially outperforms prior flat offline GCRL methods, and on the five most challenging long-horizon OGBench tasks, it improves the best prior average score from $55$ to $64$, surpassing all flat and hierarchical baselines.
\end{abstract}
\begin{document}
\maketitle
\begin{center}
\vspace{-0.75\baselineskip}
{\projectpagefont
\textbf{Project page:} \href{https://yeonsumia.github.io/dcrl}{\texttt{yeonsumia.github.io/dcrl}}
}
\end{center}
\section{Introduction}

\begin{figure}[h]
    \centering
    \includegraphics[width=0.83\linewidth]{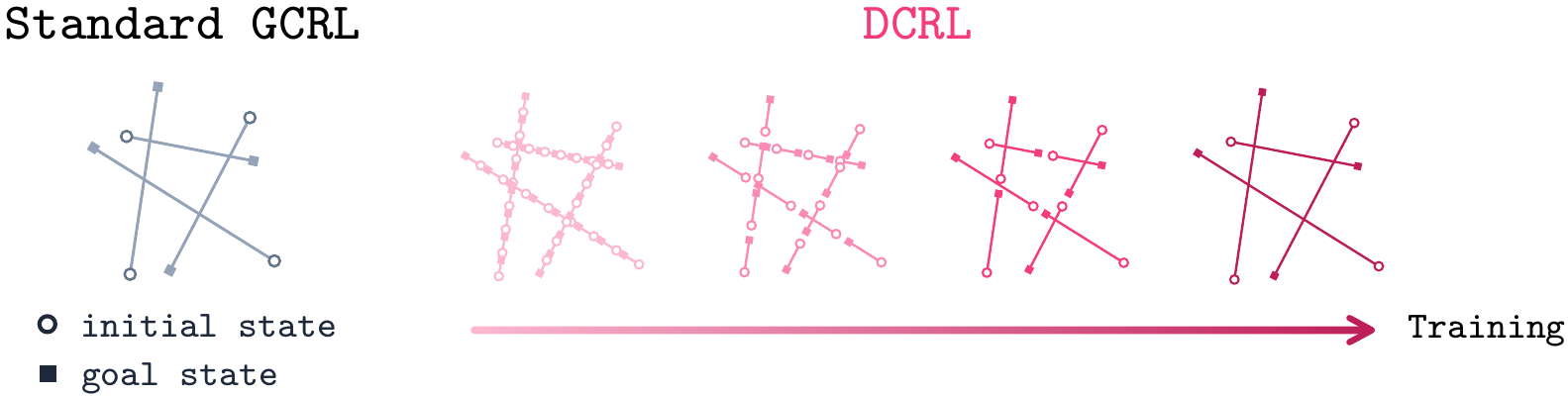}
    \caption{\method\ (right) learns short segments first, then uses them to learn longer ones, whereas standard GCRL (left) randomly samples pairs of all lengths in no particular order.}
    \label{fig:main_figure}
\end{figure}

Long-horizon goal-conditioned value learning has a natural dependency structure: the value of a long trajectory segment is built from the values of its shorter constituent segments. Yet most offline goal-conditioned reinforcement learning (GCRL) methods form bootstrapped targets from independently sampled transitions or subgoals, without ensuring that shorter-horizon values are learned before the longer-horizon values that depend on them. A long-range estimate may therefore bootstrap from shorter-range values that are themselves still inaccurate. Moreover, standard Bellman-style and transitive value backups choose optimistically among actions or intermediate states~\citep{bellman1966dynamic,floyd1962algorithm}. With finite offline data, this selection can favor overestimated candidates and propagate their errors through repeated backups, turning small local errors into large global inconsistencies.

We propose \method~(\texttt{D}ivide-and-\texttt{C}onquer RL), which makes this dependency explicit and enforces it through bottom-up training. The key idea is simple: learn shorter-range values (children) before the longer-range values (parents) that depend on them (\cref{fig:main_figure}). Given a sampled trajectory segment, \textbf{Divide} recursively splits it at its \textit{midpoint} until reaching single-step segments. \textbf{Conquer} then trains a value function by processing the resulting segments from leaves to root. Thus, each parent is updated only after its children. A horizon-$H$ segment therefore produces a balanced binary tree with $O(H)$ nodes but only $O(\log_2 H)$ sequential composition levels. This gives \method\ \emph{logarithmic} bootstrap depth in the horizon, whereas standard backups have \emph{linear} worst-case depth.

At each internal node of the tree, \method\ applies an exact trajectory factorization along the observed route.
For any intermediate state $w$ on a trajectory segment from $s$ to $g$, the corresponding behavior value $V_\tau$ satisfies 
\[
    V_\tau(s,g) = V_\tau(s,w) \cdot V_\tau(w,g).
\]
Unlike standard max-based backups, this factorization evaluates a single route supported by the dataset. \method\ thus addresses the two sources of long-horizon error directly: bottom-up scheduling imposes the dependency order among value estimates, while exact route factorization avoids optimistic selection among noisy alternatives. We empirically show that \method\ accumulates less long-range error than standard max-based backups.

\begin{wrapfigure}{r}{0.42\textwidth}
    \vspace{-12pt}
    \centering
    \includegraphics[width=\linewidth]{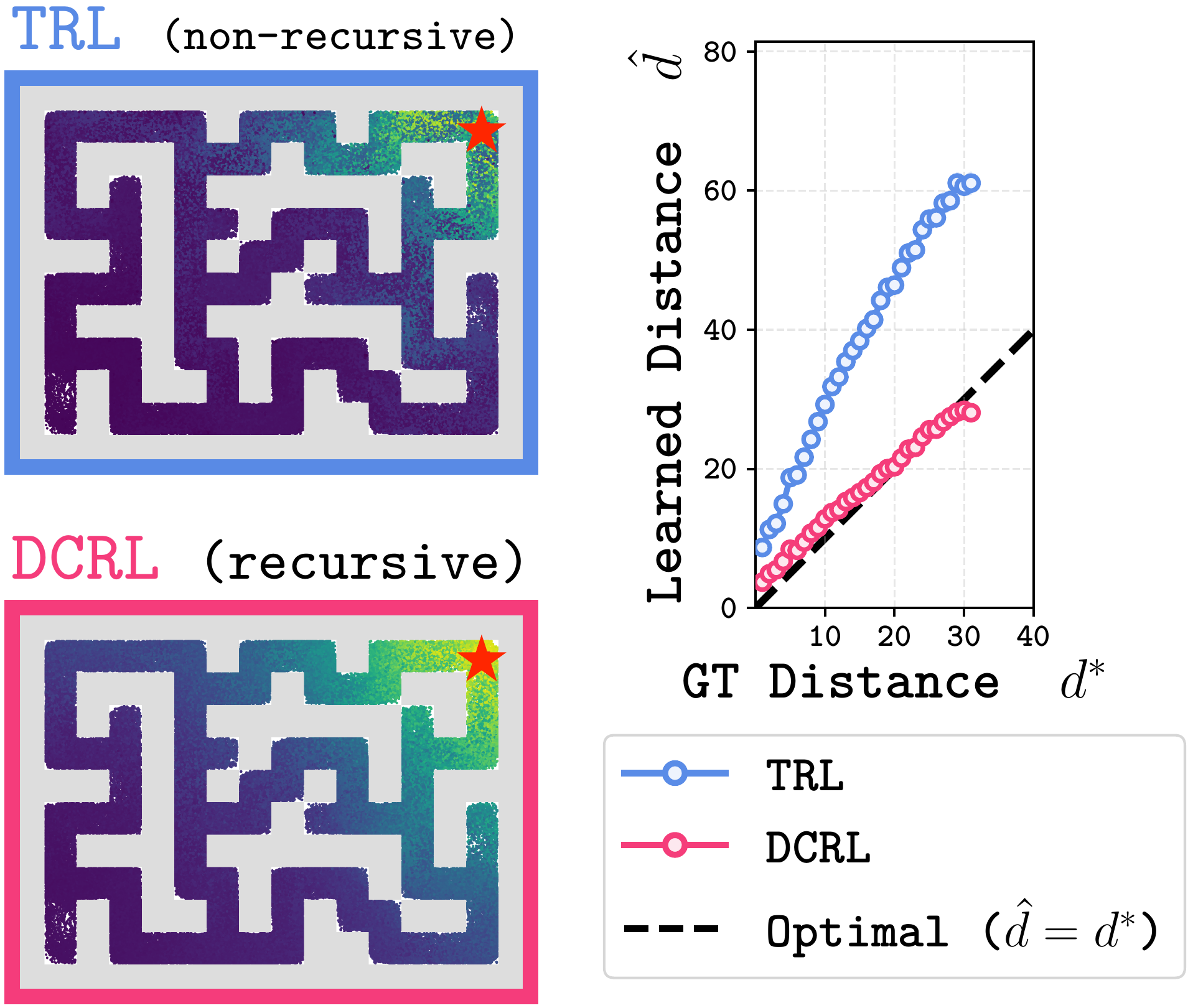}
    \caption{In \texttt{humanoidmaze-giant}, \method\ learns near-optimal distance estimates. We measure the distances predicted by \method\ and \trl\ from each state to a fixed goal (red star).} 
    \label{fig:value_map}
    \vspace{-15pt}
\end{wrapfigure}

Explicit tree construction distinguishes \method\ from Transitive RL (\trl)~\citep{park2026transitive}. \trl\ independently samples a trajectory segment and a random in-trajectory split at each update, thereby learning from isolated transitive decompositions. \method\ instead instantiates each sampled trajectory segment as a balanced binary tree, preserves its parent--child dependencies, and learns the value function on its segments from leaves to root. Thus, \trl\ samples individual transitive relations, whereas \method\ explicitly constructs and executes a divide-and-conquer computation.
As illustrated in \cref{fig:value_map}, this structured learning order produces substantially more precise long-range value estimates than \trl.

To preserve the ordering while maintaining diverse minibatches, \method\ interleaves multiple recursive trees through \textbf{slot scheduling}, which processes each tree bottom-up while mixing segment lengths within each minibatch.

Although recursive divide-and-conquer value learning mitigates error accumulation over the horizon, it recovers only behavior values. To recover optimal values, \method\ jointly trains the value function using multistep value propagation with goals globally relabeled from the dataset~\citep{andrychowicz2017hindsight,sutton2018reinforcement}. This propagation objective identifies shorter routes by combining segments from different trajectories. Thus, \method's two objectives serve complementary roles: recursive divide-and-conquer value learning provides reliable behavior routes, while multistep propagation recovers optimal routes by composing cross-trajectory routes.

In summary, our main contributions are threefold:
\begin{itemize}[leftmargin=*]
    \item \textbf{Algorithm.} We introduce \method, which explicitly instantiates divide-and-conquer value learning as a balanced binary tree over in-trajectory segments, trains each tree bottom-up using value factorization, and processes multiple trees in parallel through slot scheduling.

    \item \textbf{Analysis.} We prove that \method\ reduces worst-case bootstrap depth from \emph{linear} to \emph{logarithmic} in the horizon. In a controlled experiment, we show that this shorter dependency structure corresponds to substantially slower error accumulation.

    \item \textbf{Performance.} \method\ substantially outperforms prior flat offline GCRL algorithms across diverse goal-reaching tasks spanning different domains, horizons, and observation modalities, and surpasses hierarchical approaches on several challenging long-horizon tasks.
\end{itemize}

\section{Related Work}
\paragraph{Goal-conditioned RL.}
Goal-conditioned RL aims to learn a policy that takes the optimal action from any state to reach a goal. We focus on the offline setting, which learns purely from a fixed dataset without online interaction. Prior work can be broadly categorized into Temporal Difference (TD) methods~\citep{sutton2018reinforcement,haarnoja2018soft,kostrikov2022offline,park2025horizon}, Monte Carlo (MC) methods~\citep{sutton2018reinforcement,eysenbach2021c,eysenbach2022contrastive}, hierarchical methods~\citep{barto2003recent,lynch2019learning,park2023hiql,park2025horizon}, probabilistic methods~\citep{eysenbach2021c,zheng2024contrastive}, and quasimetric methods~\citep{kaelbling1993learning,dhiman2018floyd,jurgenson2020sub,wang2023optimal,pikekos2023efficient,myers2024learning,myers2025offline,park2026transitive}. 
\method\ builds on the quasimetric family, which we discuss next.

\paragraph{Quasimetrics in GCRL.}
In deterministic environments, GCRL reduces to learning a value function that estimates the shortest-path distance from any state to a goal. The induced optimal distance is a quasimetric (\ie, an asymmetric distance obeying the triangle inequality), a property that GCRL algorithms exploit in three ways. ``Explicit'' methods impose the triangle inequality through quasimetric-constrained value networks~\citep{wang2023optimal,myers2024learning,myers2025offline}. ``Implicit'' methods enforce it through triangle-inequality value backups~\citep{kaelbling1993learning,dhiman2018floyd,jurgenson2020sub,pikekos2023efficient,park2026transitive}. ``Planning''-based methods use it to compose shortest paths at inference time~\citep{eysenbach2019search,parascandolo2020divide,jurgenson2020sub}.

\method\ is most closely related to two ``implicit'' methods: Sub-goal tree dynamic programming (\texttt{TDP})~\citep{jurgenson2020sub} and Transitive RL (\trl)~\citep{park2026transitive}. \texttt{TDP} recursively predicts intermediate subgoals to minimize decomposition cost and learns top-down. Greedy selection can overestimate the chosen decomposition. \method\ instead fixes the subgoal to the trajectory midpoint and learns bottom-up.
\trl\ uses a similar in-trajectory backup, favoring optimistic splits via an upper-expectile loss. Because \trl\ learns values in no particular order, these optimistic updates rely on noisy estimates, amplifying overestimation through repeated propagation. This non-recursive update has \emph{linear} worst-case bootstrap depth over long horizons. \method\ instead schedules a recursive midpoint decomposition with exact factorization, yielding \emph{logarithmic} bootstrap depth and much slower empirical error growth.

\section{Preliminaries}
\paragraph{Problem Setting.}
We study GCRL in a deterministic controlled Markov process $\mathcal{M}=(\mathcal{S},\mathcal{A}, \mu, p, \gamma)$ comprising the state space, action space, initial-state distribution, deterministic transition function, and discount factor.
Any state can be a goal. We are given an unlabeled dataset $\mathcal{D}=\{\tau^{(i)}\}_{i=1}^N$, where the $i$-th trajectory $\tau^{(i)} = (s^{(i)}_0, a^{(i)}_0, s^{(i)}_1, \dots, s^{(i)}_{H_i})$ has length $H_i$.

Following prior work~\citep{wang2023optimal,park2026transitive}, we adopt the hitting-time formulation of the goal-conditioned value function: the agent receives a reward of $1$ upon first reaching $g$ and enters an absorbing state, so $V^\pi(s,g) = \mathbb{E}_\pi\!\left[\gamma^{\,T_g}\;\middle|\;s_0=s\right]$, where $T_g$ is the first hitting time of $g$ under a goal-conditioned policy $\pi$ and $\gamma\in(0,1)$ is the discount factor. The optimal value function is $V^\ast:=\max_\pi V^\pi$.
In deterministic environments, learning $V^\ast$ is equivalent to learning the optimal distance $d^\ast:=\log_\gamma V^\ast$, since $V^\ast=\gamma^{d^\ast}$. Here $d^\ast(s,g)$ is the minimum number of steps to reach $g$ from $s$ (\ie, the shortest-path distance).

\paragraph{Triangle Inequality in GCRL.}
The optimal distance $d^\ast$ satisfies the triangle inequality for all states $s, w, g \in \mathcal{S}$:
\begin{equation}
d^\ast(s,g) \;\le\; d^\ast(s,w) + d^\ast(w,g),
\label{eq:triangle_inequality}
\end{equation}
with equality when $w$ lies on a shortest path from $s$ to $g$. Through the relation $V^\ast = \gamma^{d^\ast}$, this is equivalent to a multiplicative triangle inequality on values:
\begin{equation}
V^\ast(s,g) \;\ge\; V^\ast(s,w)\cdot V^\ast(w,g).
\label{eq:value_triangle_inequality}
\end{equation}
Prior methods turn this property into a transitive backup $V(s,g)\gets \max_w V(s,w)\cdot V(w,g)$ to learn the value function by maximizing over intermediate states $w$~\citep{kaelbling1993learning,dhiman2018floyd,jurgenson2020sub,pikekos2023efficient,park2026transitive}.

\section{Recursive Value Learning for Long-Horizon Offline GCRL}
Goal-conditioned value learning with standard max-based backups (\eg, TD and transitive backups) suffers from error accumulation.
While every long-range value inherently depends on shorter-range values, standard GCRL methods do not leverage this dependency structure.
We address this by framing goal-conditioned value learning as a divide-and-conquer problem.
We propose \method\ (\texttt{D}ivide-and-\texttt{C}onquer RL), which recursively splits a trajectory segment and learns values from shorter segments to longer ones.
Thus, \method\ exhibits far slower empirical error accumulation than standard backups and achieves strong goal-reaching performance.

In \cref{sec:recursive_dnc}, we introduce recursive goal-conditioned value learning, which learns shorter-horizon (child) values before longer-horizon (parent) values. 
In \cref{sec:update_rule}, we describe the selection-free update rule of recursive value learning, which avoids maximizing over noisy intermediate states. Since this update rule is restricted to observed routes, it recovers only behavior values. To learn optimal values, \method\ also trains the same value function to compose segments across trajectories (\cref{sec:value_propagation}). In \Cref{sec:parallel,sec:practical_implementation}, we detail the practical implementation of \method.

\subsection{Recursive Divide-and-Conquer Value Learning}
\label{sec:recursive_dnc}

We introduce a recursive divide-and-conquer strategy to enforce child-to-parent ordering in goal-conditioned value learning. \method\ recursively splits each trajectory segment (\texttt{Divide}) and learns values from leaves to root (\texttt{Conquer}). For a trajectory segment between randomly sampled states $s_i$ and $s_j$, where $i<j$, we recursively bisect the segment at the midpoint $k=\lfloor{(i+j)/2}\rfloor$ to build a balanced binary tree (\cref{fig:divide_and_conquer}).
By splitting at the midpoint, a horizon-$H$ segment yields $O(H)$ tree nodes but only $O(\log_2 H)$ sequential composition levels.

\paragraph{Divide.}
For a root segment $(s_i,s_j)$, we recursively split each segment into two children, $(s_i,s_k)$ and $(s_k,s_j)$, where $k=\lfloor(i+j)/2\rfloor$, until reaching single-step leaves (\eg, $(s_i,s_{i+1})$).

\paragraph{Conquer.}
We train child segments before their parents. Single-step leaves are grounded (\ie, $V(s_i,s_{i+1})=\gamma$), and each parent is then updated from its previously trained children.

\begin{figure}[h]
    \centering
    \includegraphics[width=\linewidth]{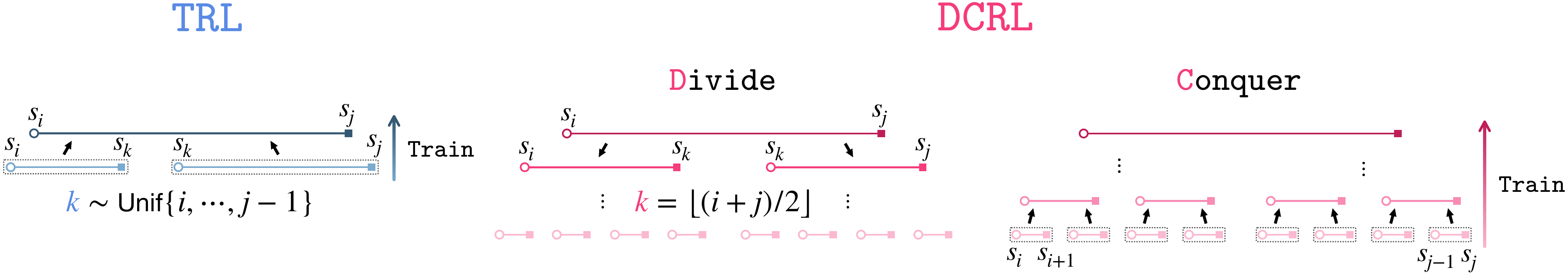}
        \caption{\method\ explicitly implements \textit{divide-and-conquer}. \texttt{\textcolor{highlight}{D}ivide}: Split a root segment $(s_i,s_j)$ at its midpoint, then recursively split each half until reaching single-step leaves. \texttt{\textcolor{highlight}{C}onquer}: Train the value function on leaves first and then on their parents, proceeding up to the root. In contrast, \trl\ backs up from a single random subgoal $s_k$, where $k \in \{ i,\ldots,j-1\}$.}
    \label{fig:divide_and_conquer}
\end{figure}

\subsection{Update Rule for Recursive Value Learning}
\label{sec:update_rule}
For recursive value learning, we seek a selection-free update that avoids amplifying errors through maximization over noisy intermediate states.
Unlike max-based backups, our update rule factorizes the value of a trajectory segment between $s$ and $g$ at its midpoint $w$:
\begin{equation}
V_\tau(s,g) = V_\tau(s,w)\cdot V_\tau(w,g).
\label{eq:pessimistic_value_backup}
\end{equation}
Because \Cref{eq:pessimistic_value_backup} uses a fixed intermediate state without maximization, $V_\tau$ learns behavior values~\citep{gulcehre2021regularized}. The corresponding distance $d_\tau(s,g) = d_\tau(s,w) + d_\tau(w,g)$ is the length of the trajectory's own route through $w$. It upper-bounds $d^\ast$: the trajectory's route need not be the shortest path, but it is supported by the data.
This behavior value function $V_\tau$ admits a useful route-suboptimality decomposition. We prove the following:
\begin{property}
\label{prop:superadditivity}
Let $d_\tau:=\log_\gamma V_\tau$, with $V_\tau$ satisfying the exact factorization in \emph{\Cref{eq:pessimistic_value_backup}}, and define the route suboptimality $e(s,g):=d_\tau(s,g)-d^\ast(s,g)$. Then,
\begin{align*}
  e(s,g) \ge e(s,w) + e(w,g),
\end{align*}
for \textbf{any} intermediate state $w$ on a trajectory segment from $s$ to $g$ (proof in Appendix~\ref{app:proof_prop_superadditivity}).
\end{property}

\Cref{prop:superadditivity} shows that route suboptimality can grow under composition: a parent route is at least as suboptimal as its two children combined. This motivates learning long-horizon behavior values recursively from shorter, less-suboptimal constituents. We therefore learn child values ($V_\tau(s,w),V_\tau(w,g)$) before using them to update their parent ($V_\tau(s,g)$), avoiding bootstrapping from unconverged child estimates. For recursive value learning (\cref{sec:recursive_dnc}), we choose $w$ as the segment midpoint, yielding a balanced binary tree.

\subsection{Value Propagation for Optimality}
\label{sec:value_propagation}
Recursive divide-and-conquer value learning yields only in-trajectory behavior values ($V_\tau$).
To recover optimal values ($V^\ast$), we additionally train the value function with multistep value-propagation.
\cref{fig:behavior_optimal_comparison} illustrates a didactic example of the discrepancy between $V_\tau$ and $V^\ast$, confirming that the optimal value often requires composing segments across trajectories.
Furthermore, the distribution of recursive tree samples is heavily skewed toward short segments.
Decomposing a horizon-$H$ root segment yields $2H-1$ tree nodes, of which $H$ are leaves. Models trained only on this distribution generalize poorly to distant relabeled goals.

\begin{wrapfigure}{r}{0.37\textwidth}
    \vspace{-10pt}
    \centering
    \includegraphics[width=\linewidth]{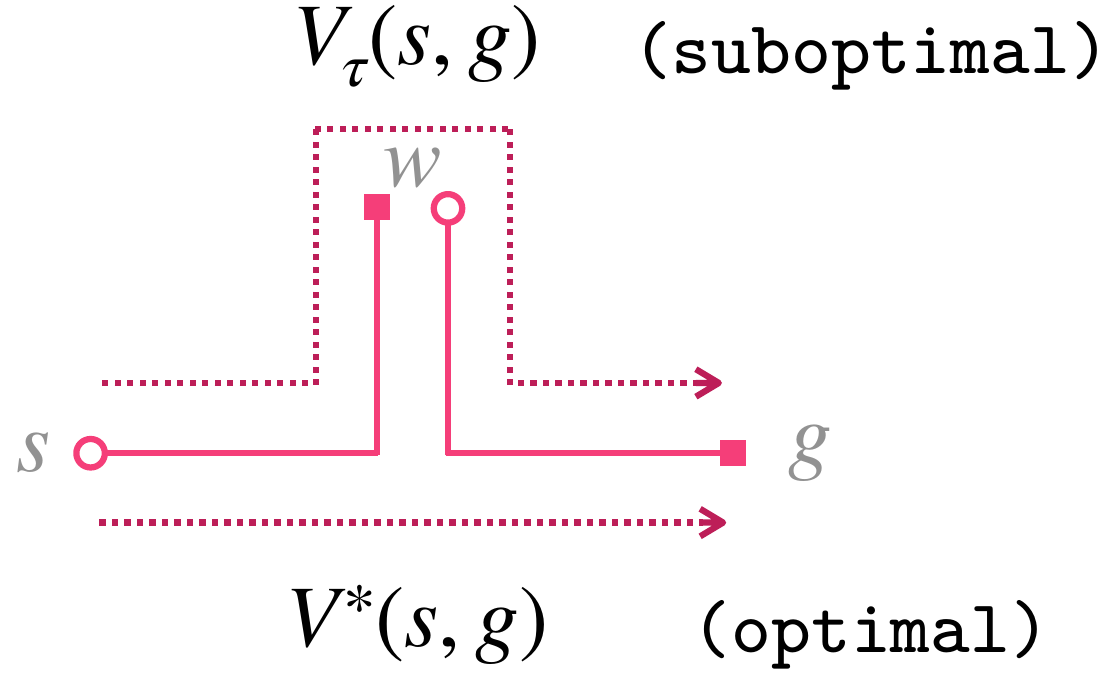}
    \caption{Optimal values require composing segments across trajectories. Given $s\rightarrow w$ and $w \rightarrow g$, the optimal route from $s$ to $g$ cannot be recovered from $V_\tau$ alone.} 
    \label{fig:behavior_optimal_comparison}
\end{wrapfigure}
Thus, we optimize our value function with an additional objective using i.i.d. samples with globally relabeled goals. Starting from the original triangle inequality (\Cref{eq:value_triangle_inequality}), we fix the intermediate subgoal $w$ to a future in-trajectory state $s_{i+n}$ with propagation horizon $n$:
\begin{equation}
    V^\ast(s_i,g) \ge \gamma^n \cdot V^\ast(s_{i+n},g).
    \label{eq:propagation_inequality}
\end{equation}
Because $s_{i+n}$ lies $n$ steps ahead on a demonstrated trajectory, $V^\ast(s_i,s_{i+n}) \ge \gamma^n$. \Cref{eq:propagation_inequality} motivates pushing $V(s_i, g)$ up toward $\gamma^n \cdot V(s_{i+n}, g)$, which is equivalent to multistep propagation~\citep{sutton2018reinforcement}. We optimize this objective using an upper-expectile loss because we seek the shortest route connecting $s_i$ to $g$. 

Importantly, divide-and-conquer supports $n$-step propagation by quickly learning long-range behavior values through a shallow dependency path. For a horizon-$H$ segment, divide-and-conquer has $O(\log_2 H)$ bootstrap depth, whereas $n$-step propagation has $O(H/n)$ depth. Long-range behavior values therefore emerge through fewer stages, giving a reliable basis for propagation.

\paragraph{Division of Labor.} 
The divide-and-conquer (\cref{sec:recursive_dnc}) and propagation (\cref{sec:value_propagation}) objectives train a shared value function but serve complementary roles: the former learns behavior values for within-trajectory pairs through a shallow dependency structure, whereas the latter composes segments across trajectories to recover optimal values. In \cref{tab:dcrl_ablation}, removing either objective degrades performance, confirming both roles are essential to \method.

\subsection{Implementation: Parallel Slot Scheduling}
\label{sec:parallel}
We introduce \textbf{slot scheduling} to efficiently execute recursive divide-and-conquer value learning.
Training on one tree at a time would be slow and produce highly correlated batches with limited trajectory and horizon diversity. To process multiple root segments concurrently, we maintain multiple independent slots. As illustrated in \cref{fig:sample_slot} and \cref{alg:sampleslot}, each slot manages the recursive tree of a single root segment. A slot $\mathcal{T}$ is initialized as a sequence of recursive samples ordered from leaves to root. Samples drawn via $\mathcal{T}.\texttt{pop()}$ are therefore processed from leaves to root, enforcing our principle: every child is trained before its parent.

\begin{figure}[ht]
    \begin{minipage}[t]{0.50\textwidth}
        \vspace{0pt}
        \captionsetup{width=\linewidth}
        \centering
        \includegraphics[width=\linewidth]{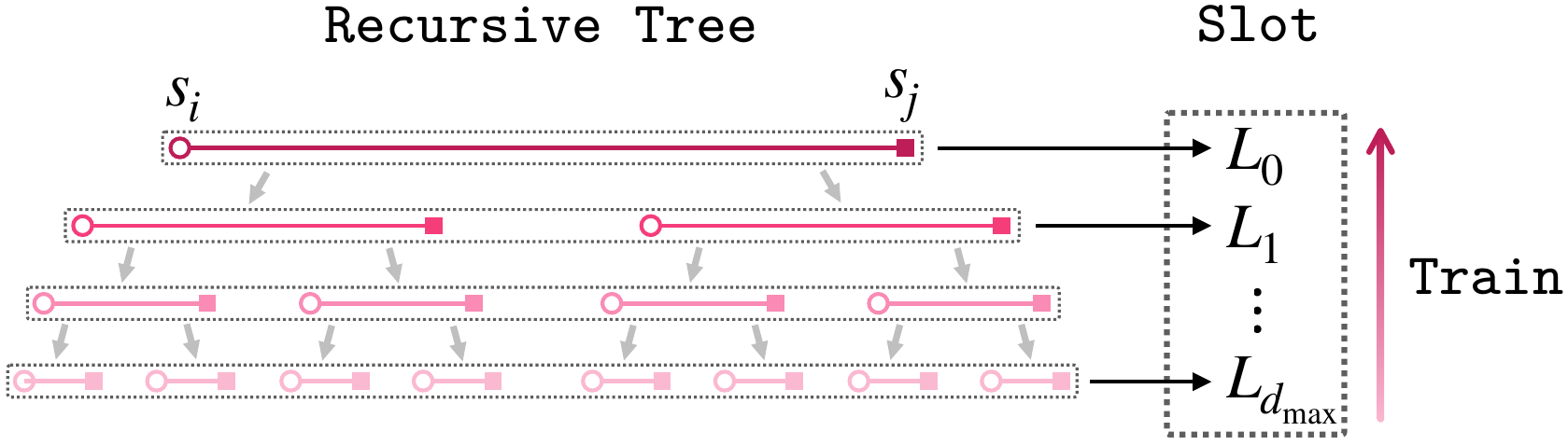}
        \caption{A \textbf{recursive tree} is flattened into a slot, ordered from leaves ($L_{d_{\max}}$) to the root ($L_0$), so every child is learned before its parent.}
        \label{fig:sample_slot}
        \vspace{-5pt}
    \end{minipage}\hfill
    \begin{minipage}[t]{0.48\textwidth}
        \vspace{-5pt}
        \captionsetup{width=\linewidth}
        \begin{algorithm}[H]
            \caption{$\texttt{SampleSlot}(\mathcal{D})$}
            \label{alg:sampleslot}
            \begin{algorithmic}[1]
                \STATE Initialize per-depth lists $\{L_d\}_{d=0}^{d_{\max}}$
                \STATE Sample $\tau= (s_0, a_0, s_1, \ldots, s_H) \sim \mathcal{D}$
                \STATE Sample $i<j \sim \text{Unif}\{0,\ldots,H\}$
                \STATE Build a \textbf{recursive tree} with $(s_i, \ldots, s_j)$ 
                \STATE \textbf{return} $\texttt{concat}(L_{d_{\max}}, \ldots, L_0)$
            \end{algorithmic}
        \end{algorithm}
        \vspace{-5pt}
    \end{minipage}
    
\end{figure}

During training, we maintain $S=128$ slots $\{\mathcal{T}_\ell\}_{\ell=1}^S$ (for batch size $1024$), drawing samples uniformly across them at each step. 
Because slots drain and refill independently, they become desynchronized, so later batches span a range of lengths while preserving child-to-parent ordering within each slot. We study how the number of slots ($S$) affects \method's performance in Appendix~\ref{app:number_of_slots} and compare \method\ with other divide-and-conquer strategies in \cref{sec:a3}.

\begin{figure}[ht]
    \centering
    \includegraphics[width=\linewidth]{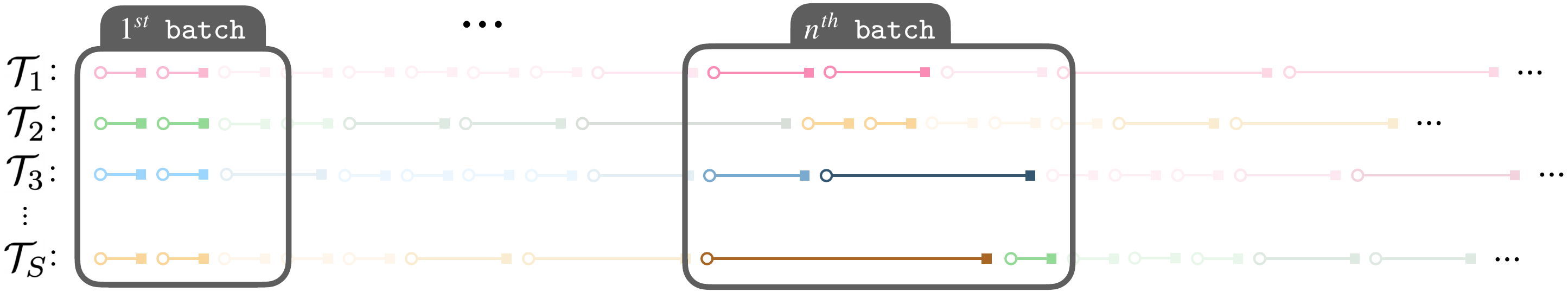}
    \caption{Maintaining $S$ independent \textbf{slots} (${\mathcal{T}_1,\ldots,\mathcal{T}_S}$) enables (i) learning child segments before their parents and (ii) mixing diverse sample lengths within batches. Each slot begins by emitting single-step leaves, as illustrated by the $1^{\text{st}}$ batch; as slots drain and refill at different rates, they become desynchronized, mixing short-horizon and long-horizon segments, as in the $n^{\text{th}}$ batch. Each slot schedules a new tree once its current one is consumed.}
    \label{fig:data_stream}
\end{figure}

\subsection{Implementation: Critic Learning}
\label{sec:practical_implementation}
In practice, we train the action-value function $Q(s, a, g): \mathcal{S} \times \mathcal{A} \times \mathcal{S} \to \left[0,1\right]$ by minimizing a divide-and-conquer loss and a propagation loss simultaneously:
\begin{equation}
    \mathcal{L}^{\method}(Q) = \mathcal{L}^{\texttt{dc}}(Q) + \mathcal{L}^{\texttt{prop}}(Q).
    \label{eq:dcrl_loss}
\end{equation}
\paragraph{Divide-and-Conquer Loss.} $\mathcal{L}^{\texttt{dc}}(Q)$ learns behavior values from recursive tree samples $\tau_{\text{DC}}$ using the exact value factorization:
\begin{equation}
    \mathcal{L}^{\texttt{dc}}(Q)
    = \mathbb{E}_{\tau_\text{DC}\sim \bm{\mathcal{T}}}\!\Bigl[
    \mathcal{L}_\text{BCE}\bigl(
    Q(s_i,a_i,s_j),\;
    \operatorname{\texttt{sg}}[Q(s_i,a_i,s_k)]\,
    \bar{Q}(s_k,a_k,s_j)
    \bigr)\Bigr],
    \label{eq:dc_loss}
\end{equation}
where $k=\lfloor{(i+j)/2}\rfloor$, $\operatorname{\texttt{sg}}$ denotes stop-gradient, $\bar{Q}$ is the EMA target critic, and $\mathcal{L}_\text{BCE}$ is the binary cross-entropy loss.
To obtain each divide-and-conquer sample $\tau_\text{DC}$, we uniformly select a slot $\mathcal{T}\in\{\mathcal{T}_1,\ldots,\mathcal{T}_S\}$ and pop its next sample. We evaluate the first child value $Q(s_i,a_i,s_k)$ using the online critic and the second child value $\bar{Q}(s_k,a_k,s_j)$ using the EMA critic. Either child value is replaced by $\gamma$ when its segment is a single step. In Appendix~\ref{app:target_value_source}, we ablate the critic used for each child value (online or EMA) and find that using the online critic for the first and the EMA critic for the second (\ie, $\operatorname{\texttt{sg}}[Q]\cdot \bar{Q}$) performs best.

\paragraph{Propagation Loss.} $\mathcal{L}^{\texttt{prop}}(Q)$ learns optimal values from globally relabeled samples $\tau_{\text{prop}}$ using multistep propagation:
\begin{equation}
    \mathcal{L}^{\texttt{prop}}(Q)
    = \mathbb{E}_{\tau_\text{prop}\sim \bm{\mathcal{D}}}\!\Bigl[\mathcal{L}^{\kappa_{\texttt{prop}}}_\text{BCE}\bigl(Q(s_i, a_i, g),\; \gamma^n\cdot\bar{Q}(s_{i+n}, a_{i+n}, g)\bigr)\Bigr],
    \label{eq:prop_loss}
\end{equation}
where $\mathcal{L}^{\kappa_{\texttt{prop}}}_\text{BCE}$ is the BCE loss with expectile $\kappa_{\texttt{prop}} \in (0.5, 1)$, and $g$ is a future, random, or current state.
For an in-trajectory goal $g=s_j$ within $n$ steps of $s_i$ ($j-i\le n$), we use $\gamma^{j-i}$ instead of $\gamma^n\cdot \bar{Q}$.
The propagation horizon $n$ is task-specific and studied in \cref{sec:a2}.

The two losses differ by design: $\mathcal{L}^{\texttt{dc}}$ learns behavior routes with plain BCE ($\kappa_{\texttt{dc}}=0.5$), whereas $\mathcal{L}^{\texttt{prop}}$ favors shorter routes through upper-expectile BCE ($\kappa_{\texttt{prop}}>0.5$), since the optimal value requires the shortest of many routes.
We ablate $\kappa_{\texttt{dc}}$ and $\kappa_{\texttt{prop}}$ in Appendix~\ref{app:expectile}; performance peaks at $\kappa_{\texttt{dc}}=0.5$, consistent with optimistic weighting introducing bias when learning behavior routes.
Appendix~\ref{app:algorithm} summarizes the full training procedure for \method.

\section{Experiments}
\label{sec:experiments}
Our experiments and analysis address three questions:
\begin{itemize}[leftmargin=*]
    \item \textbf{Q1:} How does \method\ compare with prior offline GCRL methods across diverse task domains, horizons, dataset scales, and observation modalities? ($\rightarrow$ \cref{sec:a1})
    \item \textbf{Q2:} How does \method\ scale with the propagation horizon $n$ and the task horizon $H$ compared with standard Bellman-style and transitive backups? ($\rightarrow$ \cref{sec:a2})
    \item \textbf{Q3:} How does \method\ compare with alternative divide-and-conquer strategies in terms of goal-reaching performance and training efficiency? ($\rightarrow$ \cref{sec:a3})
\end{itemize}

We compare \method\ against prior offline GCRL algorithms spanning four value-learning families:
\begin{itemize}[leftmargin=*]
    \item \textbf{Temporal Difference}: \texttt{IQL}~\citep{kostrikov2022offline}, \texttt{SAC+BC}~\citep{haarnoja2018soft, park2025horizon}, \texttt{TD}, \texttt{TD-n}~\citep{sutton2018reinforcement}
    \item \textbf{Monte Carlo}: \texttt{CRL}~\citep{eysenbach2022contrastive}, \texttt{MC}~\citep{tian2020model, shah2021rapid}
    \item \textbf{Hierarchical}: \texttt{POR}~\citep{xu2022policy}, \texttt{HIQL}~\citep{park2023hiql}, \texttt{SHARSA}~\citep{park2025horizon}
    \item \textbf{Triangle Inequality}: \texttt{QRL}~\citep{wang2023optimal}, \texttt{TMD}~\citep{myers2025offline},  \texttt{TDP}~\citep{kaelbling1993learning, jurgenson2020sub}, \texttt{COE}~\citep{pikekos2023efficient}, \texttt{TRL}~\citep{park2026transitive}
\end{itemize}

We present 95\% confidence intervals in plots and standard deviations in tables.
In tables, any values within 95\% of the best performance are marked in \textcolor{highlight}{red}.
Descriptions of each baseline, experimental details, and full results are provided in Appendices~\ref{app:baseline}, \ref{app:experimental_setup}, and \ref{app:experimental_results}, respectively.

\subsection{How Well Does \method\ Compare with Prior Offline GCRL Methods?}
\label{sec:a1}

We evaluate \method\ and prior offline GCRL methods on goal-reaching tasks spanning diverse domains, horizons, dataset scales, and observation modalities (state and pixel).
First, \cref{tab:large_scale_performance} compares these methods on the five most challenging long-horizon tasks from OGBench~\citep{park2025ogbench}.
\method\ achieves the best average score, outperforming all flat and hierarchical baselines. Notably, \method\ is the only flat method to achieve nonzero success on \texttt{cube-octuple}, while attaining the highest scores on \texttt{humanoidmaze-giant} and both puzzle tasks.

\begin{table}[ht]
    \centering
    \caption{\method\ achieves the best average performance on \textbf{long-horizon} tasks.}
    \label{tab:large_scale_performance}
    \resizebox{\textwidth}{!}{%
        \begin{tabular}{@{}lcccccccccccccc@{}}
            \toprule
                                        &                    & \multicolumn{4}{c}{\textbf{Temporal Difference}} & \multicolumn{2}{c}{\textbf{Monte Carlo}} & \multicolumn{2}{c}{\textbf{Hierarchical}} & \multicolumn{5}{c}{\textbf{Triangle Inequality}}                                                                                                                                                                                                                                                    \\
            \cmidrule(lr){3-6} \cmidrule(lr){7-8} \cmidrule(lr){9-10} \cmidrule(lr){11-15}
            \texttt{Environment}        & \texttt{FBC}       & \texttt{IQL}                                     & \texttt{SAC+BC}                                  & \texttt{TD}                               & \texttt{TD-n}\                                   & \texttt{CRL}         & \texttt{MC}         & \texttt{HIQL}        & \texttt{SHARSA}     & \texttt{QRL}       & \texttt{TDP}       & \texttt{COE}       & \texttt{TRL}                               & \textcolor{highlight}{\method}              \\ \midrule
            \texttt{humanoidmaze-giant-1B} & \texttt{0 \std{0}} & \texttt{3 \std{4}}                               & \texttt{5 \std{0}}                            & \texttt{3 \std{2}}                        & \texttt{78 \std{4}}                              & \texttt{62 \std{42}} & \texttt{79 \std{4}} & \texttt{25 \std{6}} & \texttt{42 \std{3}} & \texttt{3 \std{2}} & \texttt{2 \std{0}} & \texttt{2 \std{1}} & \texttt{79 \std{2}}                        & \texttt{\textcolor{highlight}{93} \std{3}}  \\
            \texttt{puzzle-4x5-1B}         & \texttt{0 \std{0}} & \texttt{20 \std{0}}                              & \texttt{19 \std{1}}                           & \texttt{19 \std{1}}                       & \texttt{92 \std{8}}                              & \texttt{1 \std{1}}   & \texttt{47 \std{7}} & \texttt{12 \std{5}}   & \texttt{90 \std{4}} & \texttt{0 \std{0}} & \texttt{0 \std{0}} & \texttt{0 \std{0}} & \texttt{\textcolor{highlight}{97} \std{1}} & \texttt{\textcolor{highlight}{100} \std{0}} \\
            \texttt{puzzle-4x6-1B}         & \texttt{0 \std{0}} & \texttt{17 \std{2}}                              & \texttt{11 \std{8}}                           & \texttt{13 \std{3}}                       & \texttt{53 \std{11}}                             & \texttt{0 \std{0}}   & \texttt{37 \std{4}} & \texttt{4 \std{3}}   & \texttt{55 \std{3}} & \texttt{0 \std{0}} & \texttt{0 \std{0}} & \texttt{0 \std{0}} & \texttt{51 \std{5}}                        & \texttt{\textcolor{highlight}{87} \std{5}}  \\
            \texttt{cube-quadruple-100M}     & \texttt{1 \std{0}} & \texttt{41 \std{3}}                              & \texttt{39 \std{4}}                           & \texttt{20 \std{2}}                       & \texttt{21 \std{6}}                              & \texttt{22 \std{2}}   & \texttt{1 \std{1}} & \texttt{62 \std{6}}   & \texttt{\textcolor{highlight}{74} \std{2}} & \texttt{0 \std{0}} & \texttt{0 \std{0}} & \texttt{0 \std{0}} & \texttt{0 \std{0}}                        & \texttt{37 \std{4}} \\
            \texttt{cube-octuple-1B}         & \texttt{0 \std{0}} & \texttt{0 \std{0}}                              & \texttt{0 \std{0}}                           & \texttt{0 \std{0}}                       & \texttt{0 \std{0}}                             & \texttt{0 \std{0}}   & \texttt{0 \std{0}} & \texttt{0 \std{0}}   & \texttt{\textcolor{highlight}{15} \std{1}} & \texttt{0 \std{0}} & \texttt{0 \std{0}} & \texttt{0 \std{0}} & \texttt{0 \std{0}}                        & \texttt{5 \std{1}} \\
            \midrule
            \texttt{Average}            & \texttt{\textbf{0}}         & \texttt{\textbf{16}}                                      & \texttt{\textbf{15}}                                      & \texttt{\textbf{11}}                               & \texttt{\textbf{49}}                                      & \texttt{\textbf{17}}          & \texttt{\textbf{33}}         & \texttt{\textbf{21}}          & \texttt{\textbf{55}}         & \texttt{\textbf{1}}         & \texttt{\textbf{0}}         & \texttt{\textbf{0}}         & \texttt{\textbf{45}}                                & \texttt{\textcolor{highlight}{\textbf{64}}}          \\
            \bottomrule
        \end{tabular}%
    }
\end{table}

Next, \cref{fig:ogbench_overall} compares aggregate performance on standard-horizon OGBench tasks (10 state-based and 10 pixel-based). \method\ performs best in both modalities.

\begin{figure}[ht]
    \centering
    \includegraphics[width=1.0\linewidth]{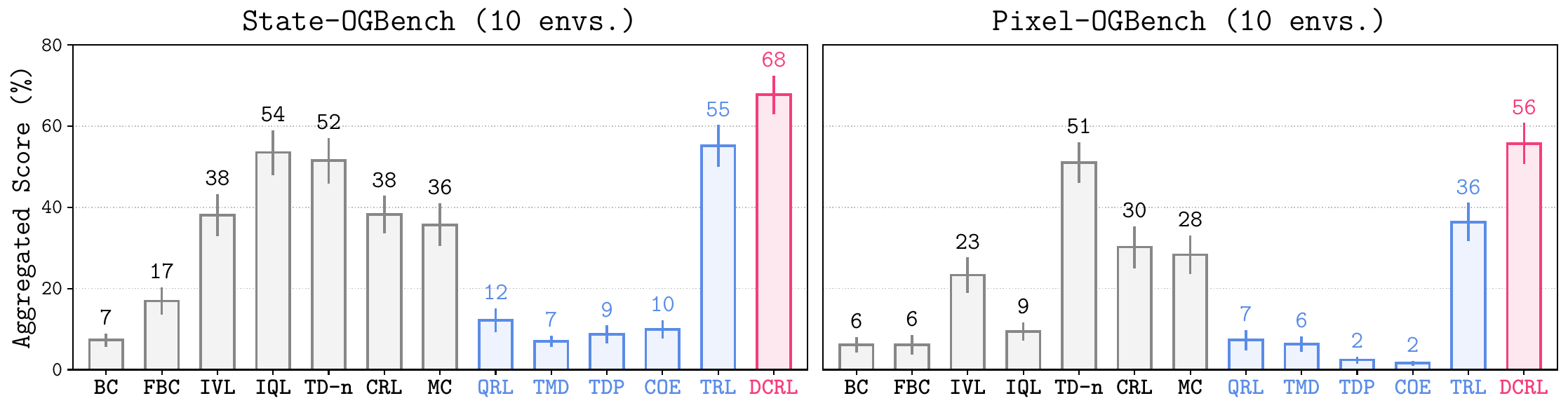}
    \caption{\method\ achieves state-of-the-art performance on standard OGBench tasks across diverse task \textbf{domains} and observation \textbf{modalities}. Full results are in Tables~\ref{tab:standard_performance} and \ref{tab:visual_performance}.}
    \label{fig:ogbench_overall}
\end{figure}

Finally, we test \method\ and prior methods on CALVIN~\citep{mees2022calvin}, a compositional, long-horizon manipulation environment.
We use the in-domain setup, where four subtasks must be solved in sequence using 1,239 task-agnostic trajectories spanning 34 subtasks.
\cref{fig:calvin_results} reports \texttt{Return} (subtasks completed), \texttt{Completion} (full-task success), and task progress.
Notably, \method\ is the only flat method that completes all four subtasks consecutively.

\begin{figure}[ht]
    \centering
    \includegraphics[width=1.0\linewidth]{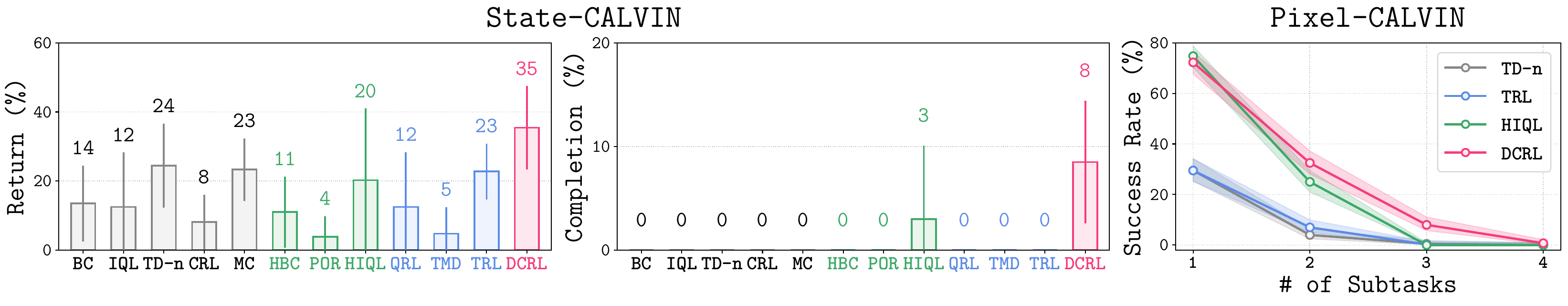}
    \caption{\method\ achieves the highest four-subtask completion rate on state-based CALVIN (8\%), versus 3\% for the best hierarchical baseline and 0\% for all flat baselines. On pixel-based CALVIN, which excludes privileged object poses, \method\ reaches at least three subtasks in 8\% of rollouts and all four in 1\%, whereas all three baselines remain at 0\% beyond two subtasks.}
    \label{fig:calvin_results}
\end{figure}

\subsection{Is \method\ Better Than \td\ and \trl\ Across Propagation and Task Horizons?}
\label{sec:a2}

We first study how the propagation horizon $n$ affects \method's performance.
While \method\ relies on value propagation to recover optimal values, this mechanism can also be applied to \trl. 
We therefore compare \texttt{DCRL-n} against \texttt{TD-n} and \texttt{TRL-n} (\trl\ with $n$-step propagation) across a range of horizons $n$, isolating the effect of \method's recursive value learning.

\cref{fig:result_prop_h} (left) shows the results on two long-horizon tasks across $n\in\{1,5,10,25,50,100\}$.
\texttt{DCRL-n} achieves the best performance at larger $n$ ($\ge 25$) on both tasks, while \texttt{TD-n} performs especially poorly on the puzzle task and \texttt{TRL-n} struggles on the cube-manipulation task.

At $n\in\{1,5\}$, \texttt{DCRL-n} performs similarly to \texttt{TD-n} and worse than \texttt{TRL-n}. With small $n$, \texttt{DCRL-n}'s propagation chain is long ($O(H/n)$), so it cannot fully leverage its recursively learned long-horizon behavior values to recover optimal values. For $n\ge25$, \texttt{DCRL-n} maintains a clear margin over both baselines. See \cref{fig:nstep_comparison_exhaustive} for the full comparison.

To understand how error accumulation scales with task horizon $H$, we analyze each method's bootstrap depth. In Appendix~\ref{app:error_accumulation_theory}, we prove that \texttt{DCRL-n} has \textbf{logarithmic} bootstrap depth in $H$, whereas \texttt{TD-n} and \texttt{TRL-n} have \textbf{linear} worst-case depth. We then test whether this shallower dependency structure yields slower error growth under a fixed training budget.

For this controlled comparison, we use a discrete environment (\texttt{combination-lock})~\citep{park2025horizon} to measure error accumulation.
In \cref{fig:result_prop_h} (right), we report mean long-range Q-error ($|\hat{Q}-Q^\ast|$) across task horizons $H\in \{256,\ldots,2048\}$ at a fixed $n=64$. \texttt{DCRL-n} remains relatively flat while \texttt{TD-n} and \texttt{TRL-n} accumulate error rapidly. Appendix~\ref{app:error_accumulation_empirical} further separates the effects of $n$-step propagation, expectile selection, and recursive scheduling.

\begin{figure}[ht]
  \centering
  \includegraphics[width=\linewidth]{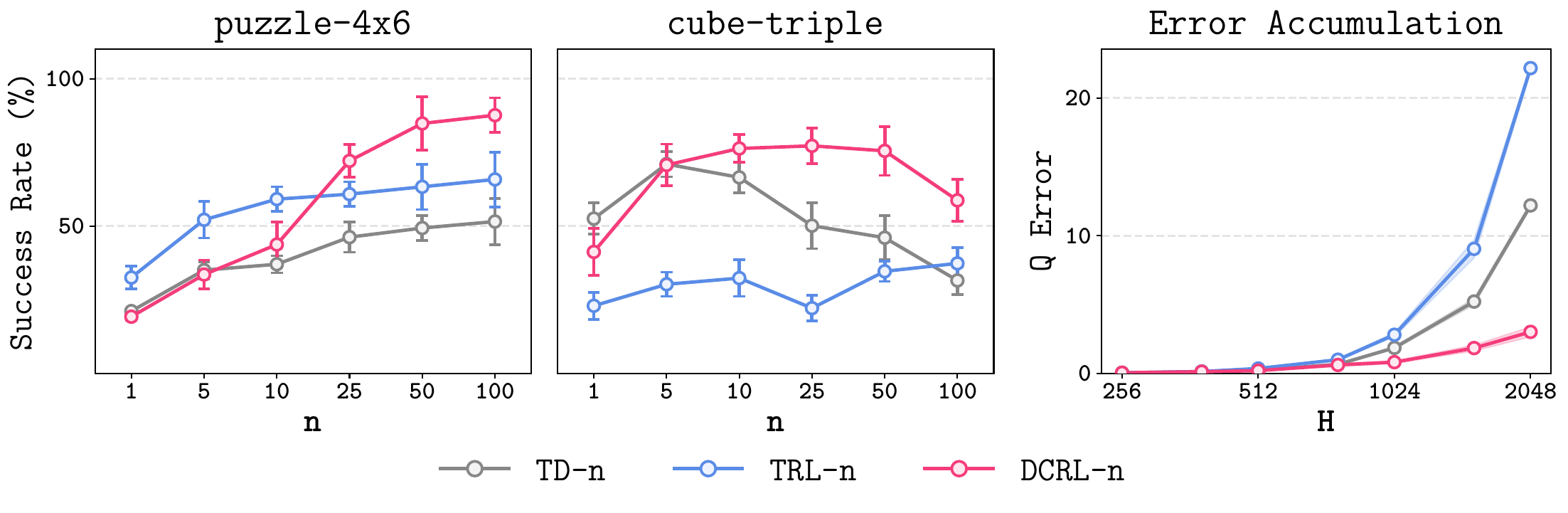}
  \caption{Left: \texttt{DCRL-n} surpasses \texttt{TD-n} and \texttt{TRL-n} at long \textbf{propagation horizons} ($n\ge 25$). Right: Q-error in \texttt{DCRL-n} grows far more slowly with \textbf{task horizon} $H$ than in the others.}
  \label{fig:result_prop_h}
\end{figure}

\subsection{Comparison with Alternative Divide-and-Conquer Strategies}
\label{sec:a3}
We compare \method\ with alternative strategies for organizing or modifying divide-and-conquer value learning. Descriptions and more experiments are in Appendices~\ref{app:dnc_strategies} and \ref{app:a3_c3_comparison}, respectively.

\begin{itemize}[leftmargin=*]
    \item \texttt{Curriculum} {\small ($\Delta$)}: learns values on short segments first using $H/\Delta$ stages.
    \item \trl\ {\small ($\lambda$)}: up-weights short state--goal pairs and down-weights long ones in the value loss.
    \item \method\ {\small (w/ random split)}: uses a random in-trajectory subgoal rather than the midpoint.
    \item \method\ {\small (w/ reverse order)}: learns recursive values in reverse (parent-to-child) order.
    \item \method\ {\small (w/o recursion)}: replaces recursion with i.i.d.\ sampling and adds one-step grounding.
    \item \method\ {\small (w/o propagation)}: uses recursive value learning alone to recover optimal values.
    \item \method\ {\small (w/ max-based backup)}: uses $V(s,g) \gets \max_w V(s,w)\cdot V(w,g)$ to select subgoals.
\end{itemize}

\noindent
\begin{minipage}[t]{0.48\textwidth}
    \centering
    \captionsetup{width=\linewidth}
    \captionof{table}{Comparison of divide-and-conquer strategies in \texttt{humanoidmaze-giant}.}
    \label{tab:a3_comparison}
    \resizebox{\linewidth}{!}{%
        \begin{tabular}{lcc}
            \toprule
            \texttt{Method} & \texttt{Success} $\uparrow$ & \texttt{Throughput} $\uparrow$ \\
            \midrule
            \texttt{Curriculum} \small{($\Delta=25$)} & \texttt{28 \std{7}} & \texttt{96 \std{2}} \\
            \texttt{Curriculum} \small{($\Delta=50$)} & \texttt{39 \std{8}} & \texttt{99 \std{4}} \\
            \texttt{Curriculum} \small{($\Delta=100$)} & \texttt{29 \std{5}} & \texttt{99 \std{2}} \\
            \midrule
            \trl\ \small{($\lambda=0$)} & \texttt{54 \std{5}} & \texttt{\textcolor{highlight}{110} \std{1}} \\
            \trl\ \small{($\lambda=0.01$)} & \texttt{53 \std{2}} & \texttt{\textcolor{highlight}{107} \std{1}} \\
            \trl\ \small{($\lambda=0.1$)} & \texttt{52 \std{3}} & \texttt{\textcolor{highlight}{106} \std{2}} \\
            \midrule
            \method\ \small{(w/ random split)} & \texttt{\textcolor{highlight}{92} \std{1}} & \texttt{66 \std{1}} \\
            \method\ \small{(w/ reverse order)} & \texttt{18 \std{6}} & \texttt{82 \std{4}} \\
            \method\ & \texttt{\textcolor{highlight}{93} \std{3}} & \texttt{84 \std{3}} \\
            \bottomrule
        \end{tabular}%
    }
\end{minipage}\hfill
\begin{minipage}[t]{0.48\textwidth}
    \cref{tab:a3_comparison} reports success and throughput (gradient steps per second) at a fixed batch size of 1024.
    \method\ achieves over $1.7\times$ \trl's success rate at 24\% lower throughput. Random splitting creates deeper, less balanced trees and has lower throughput, but performance remains unchanged, likely because exact factorization and child-to-parent scheduling are retained.
    Reversing the learning order reduces success from 93\% to 18\%, consistent with parents bootstrapping from not-yet-updated child estimates.
    Curriculum learning is highly sensitive to $\Delta$, and its best performance trails \method\ and \trl. Finally, \trl's distance re-weighting does not improve the unweighted baseline.
\end{minipage}

\clearpage

\begin{wraptable}{l}{0.48\textwidth}
    \centering
    \captionsetup{width=\linewidth}
    \captionof{table}{Ablation studies of \method\ on state-based OGBench (10 environments).}
    \label{tab:dcrl_ablation}
    \resizebox{\linewidth}{!}{%
        \begin{tabular}{lcc}
            \toprule
            \texttt{Method} & \texttt{Success} $\uparrow$ & \texttt{Throughput} $\uparrow$ \\
            \midrule
            \method & \texttt{\textcolor{highlight}{68} \std{2}} & \texttt{\textcolor{highlight}{193} \std{3}} \\
            \method\ \small{(w/o recursion)} & \texttt{39 \std{3}} & \texttt{173 \std{1}} \\
            \method\ \small{(w/o propagation)} & \texttt{45 \std{2}} & \texttt{\textcolor{highlight}{201} \std{5}} \\
            \method\ \small{(w/ max-backup)} & \texttt{62 \std{2}} & \texttt{157 \std{2}} \\
            \bottomrule
        \end{tabular}%
    }
\end{wraptable}

\cref{tab:dcrl_ablation} ablates \method's recursion, propagation, and value-update rule. Removing recursion causes the largest performance drop, highlighting the recursive tree's central role. Removing propagation sharply degrades performance, showing that one optimistic divide-and-conquer objective cannot replace separate behavior-value learning and optimal-value propagation. Max-based backups underperform exact factorization, consistent with selection-induced overestimation, and lower throughput due to subgoal-search overhead. Overall, these ablations validate \method's division of labor: recursive factorization learns reliable behavior values, while propagation recovers optimality.

\section{Conclusion}
We introduce \method, which exploits a natural dependency structure in goal-conditioned value learning: longer-segment values are built from shorter ones.
\method\ realizes recursive value learning through a balanced binary tree over in-trajectory segments, learns behavior values from leaves to root, and propagates them across trajectories toward optimality.

Across offline GCRL benchmarks, \method\ substantially outperforms prior flat methods and surpasses hierarchical methods on several challenging long-horizon tasks. By mitigating horizon-induced error accumulation, \method\ provides a foundation for scaling GCRL to longer horizons.

\paragraph{Limitations.}
Like other triangle-inequality methods, \method\ assumes deterministic dynamics. Under stochastic dynamics, exact factorization generally fails because the expectation of a product need not factorize (\ie, $\mathbb{E}[XY] \neq \mathbb{E}[X]\cdot \mathbb{E}[Y]$).
Additionally, hierarchical approaches outperform \method\ on cube-manipulation tasks, despite these tasks having shorter horizons than \texttt{humanoidmaze-giant}, where \method\ excels. Thus, their difficulty does not stem solely from horizon length. Extending recursive sampling to stochastic dynamics and understanding when hierarchy remains beneficial are important directions for future work.

\bibliography{references}
\bibliographystyle{plainnat}

\appendix
\clearpage

\section{Theoretical Analysis}
\label{app:theory}

\subsection{Proof of \cref{prop:superadditivity}}
\label{app:proof_prop_superadditivity}
\begin{proof}
\begin{align*}
    d^\ast(s,g) + e(s,g) &= d_\tau(s,g) \\
                         &= d_\tau(s,w) + d_\tau(w,g) \\
                         &= \left[ d^\ast(s,w) + e(s,w) \right] + \left[ d^\ast(w,g) + e(w,g) \right] \\
                         &= \left[ d^\ast(s,w) + d^\ast(w,g) \right] + \left[ e(s,w) + e(w,g) \right] \\
                         &\ge d^\ast(s,g) + e(s,w) + e(w,g)
\end{align*}
Therefore, we have $e(s,g) \ge e(s,w) + e(w,g)$.
\end{proof}

\subsection{Error Accumulation Analysis}
\label{app:error_accumulation}

We analyze error accumulation theoretically and empirically. The theoretical subsection (Appendix~\ref{app:error_accumulation_theory}) formalizes backup dependencies through bootstrap depth and derives upper error bounds for \texttt{TD-n}, \texttt{TRL-n}, and \texttt{DCRL-n}, highlighting the effects of selection-free factorization and child-to-parent scheduling. The empirical subsection (Appendix~\ref{app:error_accumulation_empirical}) tests these predictions in a controlled \texttt{combination-lock} environment by measuring long-range error across horizons and ablating the recursive schedule.

\subsubsection{Theoretical Validation}
\label{app:error_accumulation_theory}

We compare \texttt{DCRL-n} with Bellman-style and transitive-backup methods (\texttt{TD-n}, \texttt{TRL-n}). Both \texttt{DCRL-n} and \texttt{TRL-n} include the same additional $n$-step propagation objective, whereas vanilla \trl\ uses only its transitive objective.

For the dependency analysis, each sample $p=(s_i,s_j)$ denotes the critic value associated with $(s_i,a_i,s_j)$, where $a_i$ is the dataset action at $s_i$. We suppress this action argument and write
$Q_\bullet(p):=Q_\bullet(s_i,a_i,s_j)$. For $\gamma\in(0,1)$, define the corresponding distance estimate
\[
\hat d_\bullet(p):=\log_\gamma Q_\bullet(p),
\qquad
e_\bullet(p):=\hat d_\bullet(p)-d^\ast(p),
\]
where $d^\ast(p):=\log_\gamma Q^\ast(s_i,a_i,s_j)$ is the
optimal distance conditioned on first taking the dataset action $a_i$. Let
$\mathcal{P}=\{(s_i,s_j):0\le i\le j\le T\}$ be the set of in-trajectory pairs and let $\mathcal{P}_\circ\subset\mathcal{P}$ denote the grounded pairs.
For the structural analysis, pairs within \(n\) steps are treated as grounded because their targets require no further bootstrapping: \texttt{TD-n} grounds them through \(n\)-step returns, while the shared propagation objective provides the same grounding to \texttt{DCRL-n} and \texttt{TRL-n}. Vanilla \trl\ grounds only one-step pairs. The error bounds below additionally idealize these grounded targets as exact relative to \(d^\ast\); the bootstrap-depth results require only that recursion terminates at them.

A possibly stochastic backup operator
$B:\mathcal{P}\setminus\mathcal{P}_\circ\to2^{\mathcal{P}}$
maps each non-grounded sample $p=(s,g)$ to the learned constituents in its bootstrap target:
\begin{equation*}
  B_{\texttt{TD-n}}(s_i,g)=\{(s_{i+n},g)\},\qquad
  B_{\texttt{DCRL-n}}(s_i,s_j)=B_{\texttt{TRL-n}}(s_i,s_j)=\{(s_i,s_k),(s_k,s_j)\},
\end{equation*}
where $k=\lfloor(i+j)/2\rfloor$ for \texttt{DCRL-n} and
$k\sim\mathrm{Unif}\{i+1,\ldots,j-1\}$ for \texttt{TRL-n}.
\trl's original sampling also permits $k=i$; because this yields an identity decomposition that does not advance the dependency graph, we condition on the effective nontrivial backup $i<k<j$.

\begin{definition}
\label{def:dag_main}
For a root sample $p$ and a realization $\omega$ of all stochastic backup choices, the \emph{\textbf{realized dependency graph}} $G_B(p;\omega)$ is obtained by recursively adding an edge $(p',q)$ for every $q\in B(p';\omega)$, stopping when $q\in\mathcal{P}_\circ$. For \emph{\texttt{TRL-n}}, this graph represents the recursive unrolling of successive random backups, not an explicitly stored tree.
\end{definition}

\begin{definition}
\label{def:depth_main}
The realized bootstrap depth $\mathrm{D}_B(p;\omega)$ is the longest path from $p$ to a grounded leaf in $G_B(p;\omega)$. For a stochastic operator, define
\[
\mathrm{D}^{\mathrm{avg}}_B(p)
:=\mathbb{E}_\omega[\mathrm{D}_B(p;\omega)],
\qquad
\mathrm{D}^{\max}_B(p)
:=\sup_\omega\mathrm{D}_B(p;\omega).
\]
For deterministic operators, we simply write $\mathrm{D}_B(p)$.
\end{definition}

\begin{proposition}
\label{prop:depth_main}
Given a non-grounded sample $p=(s_i,s_j)$ with $H:=j-i$ and fixed $n$,
\begin{align*}
  \mathrm{D}_{\emph{\texttt{TD-n}}}(p)
    &=\lceil H/n\rceil-1,\\
  \mathrm{D}_{\emph{\texttt{DCRL-n}}}(p)
    &=\lceil\log_2(H/n)\rceil,\\
  \mathrm{D}^{\mathrm{avg}}_{\emph{\texttt{TRL-n}}}(p)
    &=\Theta(\log(H/n)),&
  \mathrm{D}^{\max}_{\emph{\texttt{TRL-n}}}(p)
    &=\Theta(H-n),\\
  \mathrm{D}^{\mathrm{avg}}_{\emph{\trl}}(p)
    &=\Theta(\log H),&
  \mathrm{D}^{\max}_{\emph{\trl}}(p)
    &=\Theta(H).
\end{align*}
\end{proposition}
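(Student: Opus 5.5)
Every quantity here is a recursion on the segment length $H=j-i$: by Definition~\ref{def:depth_main}, the depth of a non-grounded sample is $1+\max$ over its constituents in $B(p)$, and grounded samples (length $\le n$, or length $1$ for vanilla \trl) have depth $0$. So I would reduce each case to a recurrence $D(H)$ and solve it.

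\textbf{\texttt{TD-n}.} The operator is deterministic and $B(s_i,g)=\{(s_{i+n},g)\}$. This gives $D(H)=1+D(H-n)$ for $H>n$ and $D(H)=0$ for $1\le H\le n$. Unrolling, $D(H)=m$ exactly when $mn<H\le(m+1)n$, that is, $D(H)=\lceil H/n\rceil-1$.

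\textbf{\texttt{DCRL-n}.} The operator is deterministic and splits at $k=\lfloor(i+j)/2\rfloor$. The children have lengths $\lfloor H/2\rfloor$ and $\lceil H/2\rceil$, and depth is monotone in length (a simple induction), so $D(H)=1+D(\lceil H/2\rceil)$ for $H>n$. I would prove $D(H)=\lceil\log_2(H/n)\rceil$ by strong induction, using the identity $\lceil\lceil H/2\rceil/n\rceil=\lceil H/(2n)\rceil$ together with $\lceil\log_2 x\rceil=1+\lceil\log_2(x/2)\rceil$. The obstacle here is integer rounding: $\lceil\log_2(\lceil H/2\rceil/n)\rceil$ must equal $\lceil\log_2(H/(2n))\rceil$. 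I would handle this via the characterization that $D(H)\le m$ iff $H\le 2^m n$. This holds because $\lceil H/2\rceil\le 2^{m-1}n$ iff $H\le 2^m n$, since $2^{m-1}n$ is an integer for $m\ge1$. For non-integer ratios, the exact statement is read as this threshold form.

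\textbf{\texttt{TRL-n} and \trl, worst case.} For the maximum, an adversarial realization always takes $k=j-1$, which peels one step per level. That chain has $H-n$ non-grounded levels, so it gives depth $H-n$ (respectively $H-1$ for vanilla \trl). Conversely, every split strictly shrinks length by at least $1$, so depth is at most $H-n$. Hence $\Theta(H-n)$ and $\Theta(H)$.

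\textbf{\texttt{TRL-n} and \trl, average case.} This is the main obstacle, because the expectation of a maximum over two dependent subtrees does not decompose. The depth is the height of a random recursive split tree with uniform split point, which is the same as the height of a random binary search tree. I would bound it from both sides.
\begin{itemize}
\item \emph{Lower bound.} $D\ge 1+D(\max(k-i,j-k))$, and the larger child has length $\le H$, so $D\ge\log_2(H/n)$ deterministically. This is the balanced optimum and is at least the \texttt{DCRL-n} value.
\item \emph{Upper bound.} I would use the exponential-moment method of Devroye's BST-height analysis. Set $Y(H)=\mathbb{E}[2^{D(H)}]$. Then $Y(H)\le \frac{2}{H-1}\sum_{k}\bigl(Y(k)+Y(H-k)\bigr)$, and this recurrence gives $Y(H)=O\bigl((H/n)^{c}\bigr)$ for a constant $c$.
\item \emph{Conclusion.} Jensen gives $\mathbb{E}[D]\le \log_2 Y=O(\log(H/n))$.
\end{itemize}
Restricting splits to $i<k<j$ (with the scale measured in units of $n$ via the grounding threshold) only changes constants. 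The same argument with $n=1$ gives $\Theta(\log H)$ for vanilla \trl.
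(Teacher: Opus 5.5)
Your proposal is correct. For \texttt{TD-n}, \texttt{DCRL-n}, and the worst case of \texttt{TRL-n} and \trl\ it follows the paper's recurrences, with two small gains in rigor:
\begin{itemize}
\item Your threshold characterization ($D(H)\le m$ iff $H\le 2^m n$) handles the rounding that the paper summarizes as ``the longest remaining segment after $d$ levels is $\lceil H/2^d\rceil$''. It already equals $\lceil\log_2(H/n)\rceil$ exactly, so no reinterpretation for non-integer ratios is needed.
\item You supply the matching bound $D\le H-n$ on every realization, which the paper leaves implicit.
\end{itemize}

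The real difference is the average case. The paper identifies uniform splitting with a random binary search tree, cites Devroye's height result, and asserts without proof that truncating at length $n$ leaves the asymptotics unchanged. You instead prove both directions directly.

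Your upper bound is the standard exponential-moment argument, and it works:
\begin{itemize}
\item Your recurrence is $Y(H)\le\frac{4}{H-1}\sum_{k=1}^{H-1}Y(k)$ with $Y\equiv 1$ for lengths at most $n$.
\item Its equality solution $T$ satisfies $H\,T(H+1)=(H+3)\,T(H)$ for $H>n$ with $T(n+1)=4$.
\item By induction, $Y(H)\le T(H)=4\,\frac{H(H+1)(H+2)}{(n+1)(n+2)(n+3)}$. So $c=3$, and Jensen gives $\mathbb{E}[D]\le 2+3\log_2\frac{H+2}{n+1}$.
\end{itemize}
This route treats the truncation explicitly and is nonasymptotic. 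The citation, by contrast, gives the sharp constant but leaves the truncation step informal.

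One slip: in your lower bound, ``the larger child has length $\le H$'' yields nothing. What you need is that the larger child has length at least $\lceil H/2\rceil$. Alternatively, the grounded leaves partition the segment into pieces of length at most $n$, and a depth-$d$ binary tree has at most $2^d$ leaves. Hence $2^d n\ge H$ on every realization, giving depth at least $\lceil\log_2(H/n)\rceil$ deterministically.
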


\begin{proof}[Proof sketch]
Let $D_\bullet(L)$ denote the bootstrap depth of a segment of length $L$, with $D_\bullet(L)=0$ for $L\le n$.

For \texttt{TD-n}, each backup reduces the remaining length by $n$, so
\[
D_{\texttt{TD-n}}(L)
=
1+D_{\texttt{TD-n}}(L-n),
\]
which gives
$D_{\texttt{TD-n}}(H)=\lceil H/n\rceil-1$.

For \texttt{DCRL-n}, midpoint splitting yields
\[
D_{\texttt{DCRL-n}}(L)
=
1+\max\!\left\{
D_{\texttt{DCRL-n}}(\lfloor L/2\rfloor),
D_{\texttt{DCRL-n}}(\lceil L/2\rceil)
\right\}.
\]
After $d$ levels, the longest remaining segment has length
$\lceil H/2^d\rceil$. The smallest $d$ for which this length is at most $n$ is
$d=\lceil\log_2(H/n)\rceil$.

For \texttt{TRL-n}, recursively choosing uniformly random splits yields the same partition structure as a random binary search tree, stopped when every segment has length at most $n$. For fixed $n$, this truncation does not change the asymptotic height, so standard random-tree results give expected depth $\Theta(\log(H/n))$~\citep{devroye1986note}. In the worst case, every split is adjacent to an endpoint, reducing the longest remaining segment by one per level. Hence, the maximum depth is $H-n=\Theta(H-n)$. Vanilla \trl\ stops only at one-step segments; setting $n=1$ gives expected depth $\Theta(\log H)$ and maximum depth $H-1=\Theta(H)$.
\end{proof}

The midpoint split halves the longest remaining segment at each level. Recursive uniform splitting has the structure of a random binary search tree, giving logarithmic average depth, whereas repeatedly splitting next to an endpoint gives linear worst-case depth. Accordingly, \texttt{TD-n} has $\Theta(H/n)$ sequential propagation levels, \texttt{DCRL-n} has $\Theta(\log(H/n))$, and \texttt{TRL-n} lies between these cases depending on its realized splits. The transitive branch of vanilla \trl\ has the same random-split structure but stops only at one-step leaves.

\paragraph{Error transmission.}
We characterize how each backup transmits the error of its children.

\begin{assumption}
\label{ass:kappa_main}
For each method $\bullet$, there exist $\epsilon\ge0$ and
$\zeta_\bullet\ge1$ such that, for every non-grounded $p$ and backup realization $\omega$,
\begin{equation*}
  |e_\bullet(p)|
  \le
  \epsilon+\zeta_\bullet
  \sum_{q\in B_\bullet(p;\omega)}|e_\bullet(q)|,
  \qquad
  e_\bullet(p)=0
  \quad\text{for }p\in\mathcal{P}_\circ.
\end{equation*}
\end{assumption}

We use $\zeta=1$ for nonselective symmetric backups and $\zeta>1$ to model amplification from optimistic selection. In \texttt{TRL-n}, an upper expectile ($\kappa>0.5$) can preferentially weight overestimated transitive targets; we model this effect with $\zeta_{\texttt{TRL-n}}>1$. At $\kappa=0.5$, the regression is symmetric, and we set $\zeta_{\texttt{TRL-n}}=1$. Vanilla \trl\ uses the same transitive backup but has no additional $n$-step propagation objective and therefore only one-step grounding.
\texttt{DCRL-n} instead uses the deterministic midpoint $k=\lfloor(i+j)/2\rfloor$ and the factorized distance target
$\hat d(s_i,s_k)+\hat d(s_k,s_j)$ under a symmetric loss. Because there is no candidate set to select from, we model this nonselective backup with $\zeta_{\texttt{DCRL-n}}=1$. This objective cannot itself discover shortcuts, which is why \texttt{DCRL-n} pairs it with a separate multistep propagation objective.
\texttt{TD-n}'s expectile ($\kappa>0.5$) is applied across the sample distribution and therefore acts as an implicit maximum over the $n$-step continuations available from $(s_i,g)$ in the data, in the manner of
IQL~\citep{kostrikov2022offline}. Where several continuations of differing value are supported, this selection can amplify child error, which we model with $\zeta_{\texttt{TD-n}}>1$; when the continuation is effectively unique, we use $\zeta_{\texttt{TD-n}}=1$.

\paragraph{Upper bounds.}
First, we discuss upper error bounds for \texttt{TD-n}, \texttt{TRL-n}, and \texttt{DCRL-n}.
Applying \cref{ass:kappa_main} repeatedly, from a sample to the grounded leaves on which it ultimately rests, bounds that sample's error. Doing so for each method gives the following.

At $\zeta=1$, \texttt{TD-n} forms a chain of $d_{\texttt{TD-n}}=\lceil H/n\rceil-1$ backups, giving $|e|\le\epsilon d_{\texttt{TD-n}}=O(\epsilon H/n)$. \texttt{DCRL-n} forms a binary tree of depth $d=\lceil\log_2(H/n)\rceil$. Unrolling \cref{ass:kappa_main} over this tree gives $|e|\le\epsilon(2^d-1)=O(\epsilon H/n)$. Thus, both methods admit the same asymptotic upper bound, despite having different sequential depths.
For \texttt{TRL-n} with \(\kappa>0.5\), consider a realized decomposition of depth
\(d_\omega=\mathrm{D}_{\texttt{TRL-n}}(p;\omega)\). Let \(E_r\) bound the error at depth \(r\), with \(E_0=0\). Its selection gain \(\zeta_{\texttt{TRL-n}}>1\) gives
\[
E_r\le 2\zeta_{\texttt{TRL-n}}E_{r-1}+\epsilon,
\]
and therefore
\[
E_{d_\omega}
\le
\epsilon\frac{(2\zeta_{\texttt{TRL-n}})^{d_\omega}-1}
{2\zeta_{\texttt{TRL-n}}-1}
=
O\!\left(\epsilon(2\zeta_{\texttt{TRL-n}})^{d_\omega}\right).
\]
Thus, logarithmic-depth realizations yield a polynomial error bound in \(H/n\), whereas highly unbalanced realizations can produce a much looser bound. Selection amplification therefore compounds with decomposition depth.

The upper bound cannot separate \texttt{TD-n} from \method, since a horizon-$H$ value ultimately depends on the same $H/n$ grounded segments. However, their dependency structures have different sequential depths.

\paragraph{Propagation depth.}
The static upper bounds above do not account for the order in which grounded information propagates through learned targets. Consider an idealized local backup process in which information traverses at most one dependency edge per update stage. By definition, information from a grounded leaf requires at least $\mathrm{D}_B(p)$ sequential stages to reach a sample $p$. Therefore,
\begin{align*}
  \texttt{TD-n}:&\quad \Theta(H/n)\ \text{stages},\\
  \method:&\quad \Theta(\log(H/n))\ \text{stages}.
\end{align*}
Thus, \method\ has an asymptotically shorter propagation path from grounded segments to long-horizon values. Bootstrap depth alone does not determine estimation error; however, a shorter path requires fewer sequential stages for grounded information to reach long-horizon values. We empirically test whether this structural advantage corresponds to lower finite-budget error in Appendix~\ref{app:error_accumulation_empirical}.

\paragraph{Two mechanisms.}
The analysis highlights two complementary mechanisms:
\begin{enumerate}[left=0pt]
  \item \textbf{Fixed-split factorization.} The exact factorization in \cref{eq:pessimistic_value_backup} avoids the selection bias introduced by upper-expectile transitive backups such as \trl\ and \texttt{TRL-n} {\small($\kappa>0.5$)}.
  \item \textbf{Recursive scheduling.} Balanced recursion reduces the dependency depth from linear to logarithmic, while the leaf-first schedule trains child values before using them in parent updates.
\end{enumerate}
The first mechanism avoids maximization-induced selection error, whereas the second shortens and explicitly orders the propagation path from grounded to long-horizon values.

\subsubsection{Empirical Validation}
\label{app:error_accumulation_empirical}

In this section, we empirically test how bootstrap depth and recursive scheduling affect finite-budget error accumulation. We examine whether (i) shallower dependency structures correspond to slower long-range error growth and (ii) the depth advantage is realized when children are trained before their parents.

\paragraph{Didactic environment.}
Measuring error accumulation in benchmarks (\eg, OGBench) is confounded because the
optimal value $d^\ast(s,g)$ is unavailable in closed form and any proxy
introduces a discretization error that itself grows with the horizon.
We therefore use the \texttt{combination-lock} environment of
\citet{park2025horizon}, introduced for exactly this purpose.
It consists of $H$ states in a line and two discrete actions; each state has one
\emph{answer} action, fixed by a random seed, that advances one step, while the
other returns the agent to state $0$.
States are encoded as $\lceil\log_2 H\rceil$-dimensional binary vectors under a
random permutation of indices, so the ordering cannot be read off the
representation.
Following \citet{park2025horizon}, we use the undiscounted step-cost setting ($\gamma=1$) and learn distances directly rather than through $\log_\gamma Q$, which is defined only for $\gamma\in(0,1)$. Across environments of increasing horizon, distances and errors are therefore not capped by the discounted value range. Under this convention, $Q^\ast(s,g)=-d^\ast(s,g)$ and $\hat Q(s,g)=-\hat d(s,g)$, giving $|\hat Q-Q^\ast|=|\hat d-d^\ast|$. Thus, Q-error and distance error have identical magnitudes and units.

We adapt the task to the goal-conditioned setting required by \method\ and \trl\ because their backups factor through an intermediate state and therefore require
values for state--goal \emph{pairs} rather than a single fixed goal.
Along the forward path, the optimal distance is available exactly,
$d^\ast(s,g) = g-s$, giving ground truth at every horizon.

All variants learn $\hat d(s,g)$ by MSE regression in distance space and use the same network ($[512,512,512]$ MLP with LayerNorm and GELU), optimizer (Adam, $3\!\times\!10^{-4}$, batch $512$, target-update rate $0.005$), gradient budget ($5$M steps), and uniform sampling of $(s,g)$. Let $h:=g-s$ and set $n=64$.

We compare five variants. \texttt{TD-n}, both \texttt{TRL-n} variants, and \texttt{DCRL-n} use $n=64$. Each \texttt{TRL-n} variant retains \trl's transitive objective and adds the same $n$-step propagation objective used by \texttt{DCRL-n}; vanilla \trl\ uses only the transitive objective. \texttt{TRL-n} {\small($\kappa=0.7$)} uses an upper-expectile transitive loss, whereas \texttt{TRL-n} {\small($\kappa=0.5$)} uses symmetric transitive regression. Here, $\kappa$ applies only to the transitive objective; the added $n$-step propagation objective is unchanged.

\paragraph{Measuring error accumulation.}
Let $e(h) := \mathbb{E}_{g-s=h}\,\lvert \hat d(s,g) - d^\ast(s,g)\rvert$ be the mean absolute distance error at horizon $h$. As $H$ increases, the number of state--goal pairs grows as $H^2$, making the value function increasingly difficult to fit. We capture the resulting total long-range error directly, defining \emph{error accumulation} as the mean error on pairs spanning at least half of the environment horizon:
\begin{equation}
  \mathcal{A}(H) \;:=\; \mathbb{E}_{(s,g):g-s \ge H/2}\, \left[ \lvert \hat d(s,g) - d^\ast(s,g)\rvert \right].
  \label{eq:accumulation}
\end{equation}
This metric reports long-range error in the original distance units without normalization by short-range fitting error. All variants use the same architecture and optimization budget, while the four $n$-step variants also receive identical exact supervision for $h\le n$. Differences among these four therefore isolate their backup mechanisms and schedules; vanilla \trl\ additionally measures the effect of omitting $n$-step propagation.

\paragraph{Separating propagation, selection, and recursion.}
\cref{fig:accumulation} reports $\mathcal{A}(H)$ over seven horizons spanning 256--2,048. At long horizons, error follows
\trl > \texttt{TRL-n}\ {\small($\kappa=0.7$)}
> \texttt{TRL-n}\ {\small($\kappa=0.5$)}
$\approx$ \texttt{TD-n} > \texttt{DCRL-n}.
Adding $n$-step propagation substantially reduces \trl's error. With this propagation fixed, lowering the transitive expectile from $\kappa=0.7$ to $0.5$ further reduces error, showing that optimistic subgoal selection amplifies estimation noise. \texttt{TRL-n} {\small($\kappa=0.5$)} performs similarly to \texttt{TD-n}, whereas \texttt{DCRL-n} remains relatively flat and achieves the lowest error. Since \texttt{TRL-n} {\small($\kappa=0.5$)} and \texttt{DCRL-n} both use symmetric regression and the same $n$-step propagation, the remaining gap reflects \method's fixed-midpoint recursion and child-to-parent scheduling.

\paragraph{Child-to-parent scheduling realizes the depth advantage.}
\cref{fig:schedule_ablation} isolates the second mechanism by holding the backup
operator fixed and varying only the schedule.
The benefit of scheduling grows with the horizon. Up to $H\approx1024$, the two variants perform comparably (\method\ {\small \texttt{(w/o sched.)}}: $\mathcal{A}=1.61$ versus \method: $1.78$ at $H=1024$), suggesting that both can adequately learn relatively shallow dependencies. Beyond this point, \method\ {\small \texttt{(w/o sched.)}} degrades sharply---reaching $3.72$ at $H=1536$ and $9.14$ at $H=2048$, nearly matching \texttt{TD-n} ($9.10$)---while \method\ remains flat. This result suggests that unordered sampling becomes increasingly prone to bootstrapping from unconverged child estimates as the horizon grows. The ablation therefore supports the practical importance of child-to-parent scheduling for long-horizon value learning.

\begin{figure}[t]
  \centering
  \begin{minipage}{0.48\textwidth}
    \centering
    \includegraphics[width=\linewidth]{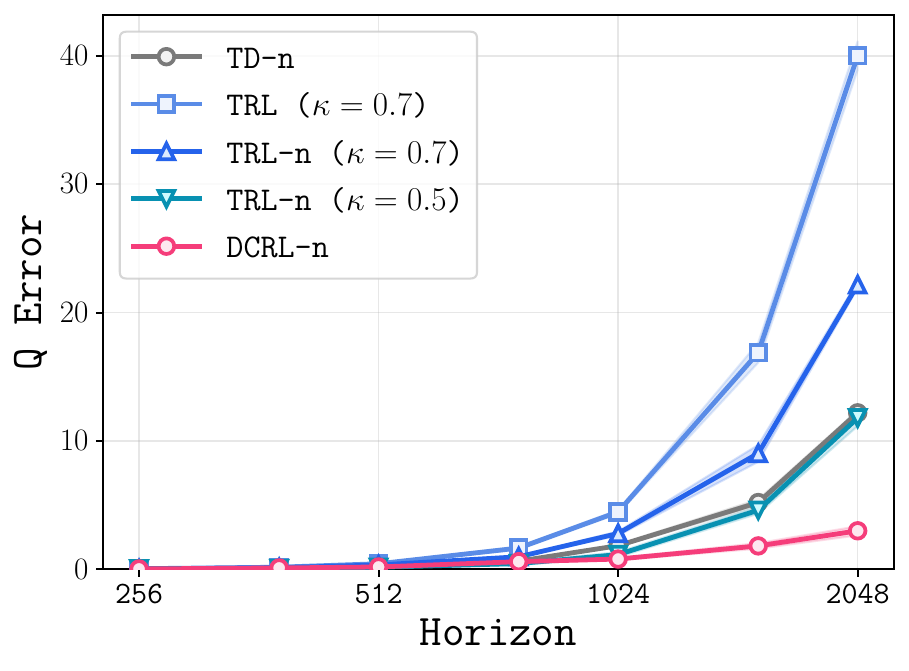}
    \caption{On \texttt{combination-lock}, accumulated error $\mathcal{A}(H)$ from \cref{eq:accumulation} ($3$ seeds; shading: $\pm1$ s.d.). Adding $n$-step propagation reduces \trl's error, while symmetric transitive regression ($\kappa=0.5$) further lowers \texttt{TRL-n}'s error to the \texttt{TD-n} level. \texttt{DCRL-n} remains lowest and grows slowest, consistent with its logarithmic bootstrap depth and child-to-parent scheduling.}
    \label{fig:accumulation}
  \end{minipage}\hfill
  \begin{minipage}{0.48\textwidth}
    \centering
    \includegraphics[width=\linewidth]{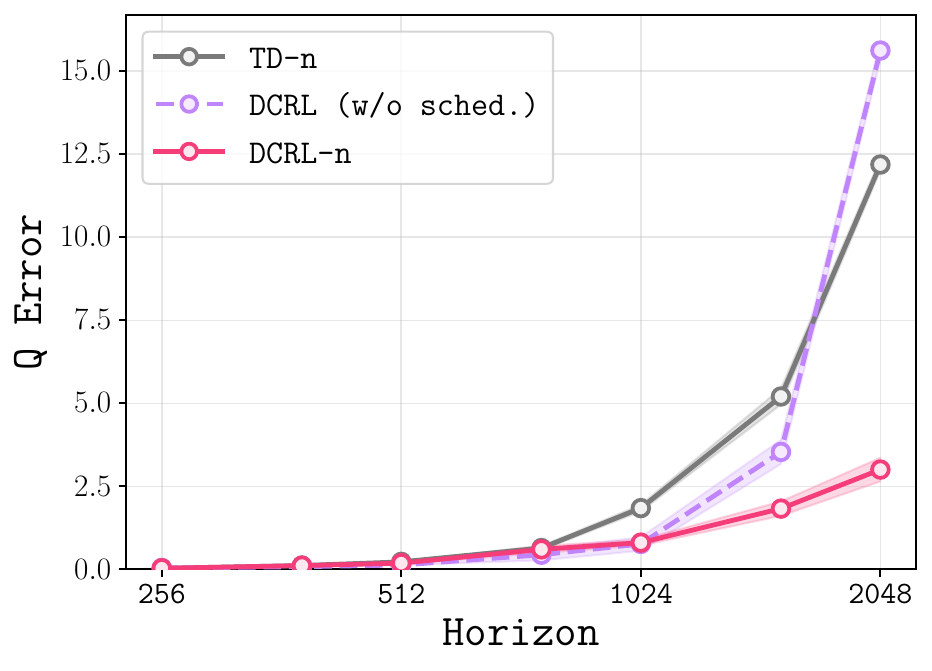}
    \caption{On \texttt{combination-lock}, we isolate child-to-parent scheduling while holding the factorized backup and $n$-step propagation fixed ($3$ seeds; shading: $\pm1$ s.d.). I.i.d.\ sampling performs comparably for $H\lesssim1024$, but its error rises sharply at longer horizons and approaches \texttt{TD-n}'s. \texttt{DCRL-n} remains flat, showing that ordering becomes important as dependency chains deepen.}
    \label{fig:schedule_ablation}
  \end{minipage}
\end{figure}

\paragraph{Scope.}
This experiment isolates errors induced by backup mechanisms and scheduling and differs from our full implementation in three respects. First, values are regressed directly in distance space with squared loss, whereas the deployed agents parameterize $Q\in(0,1)$ and use multiplicative factorization under binary cross-entropy loss. Distance regression leaves error unbounded and therefore observable. Second, the didactic variants optimize only the backup objectives: \trl\ uses its transitive objective, \texttt{TRL-n} adds $n$-step propagation, and \texttt{DCRL-n} combines recursive factorization with the same propagation objective. They omit policy learning and implementation details of the full agents. Third, action and intermediate-state selection behave differently in \texttt{combination-lock}. Over actions, the correct successor and reset state differ in value by $\Theta(s)$, far exceeding approximation noise; hence, \texttt{TD-n}'s expectile has no ambiguous choice and $\zeta_{\texttt{TD-n}}=1$. The experiment does not measure action-selection amplification under the multimodal benchmark data. Over intermediate states, every split is exactly tied, $d^\ast(s,k)+d^\ast(k,g)=g-s$, so upper-expectile transitive regression favors estimation noise. Accordingly, the gap between \trl\ and \texttt{TRL-n} {\small($\kappa=0.7$)} reflects the addition of $n$-step propagation, while the gap between the two \texttt{TRL-n} variants isolates transitive selection amplification. The gap between \texttt{TRL-n} {\small($\kappa=0.5$)} and \texttt{DCRL-n} reflects fixed-midpoint recursion and child-to-parent scheduling. These conclusions concern backup mechanisms and scheduling rather than end-to-end performance, which \cref{sec:experiments} evaluates separately.

\begin{figure}[ht]
    \centering
    \includegraphics[width=\linewidth]{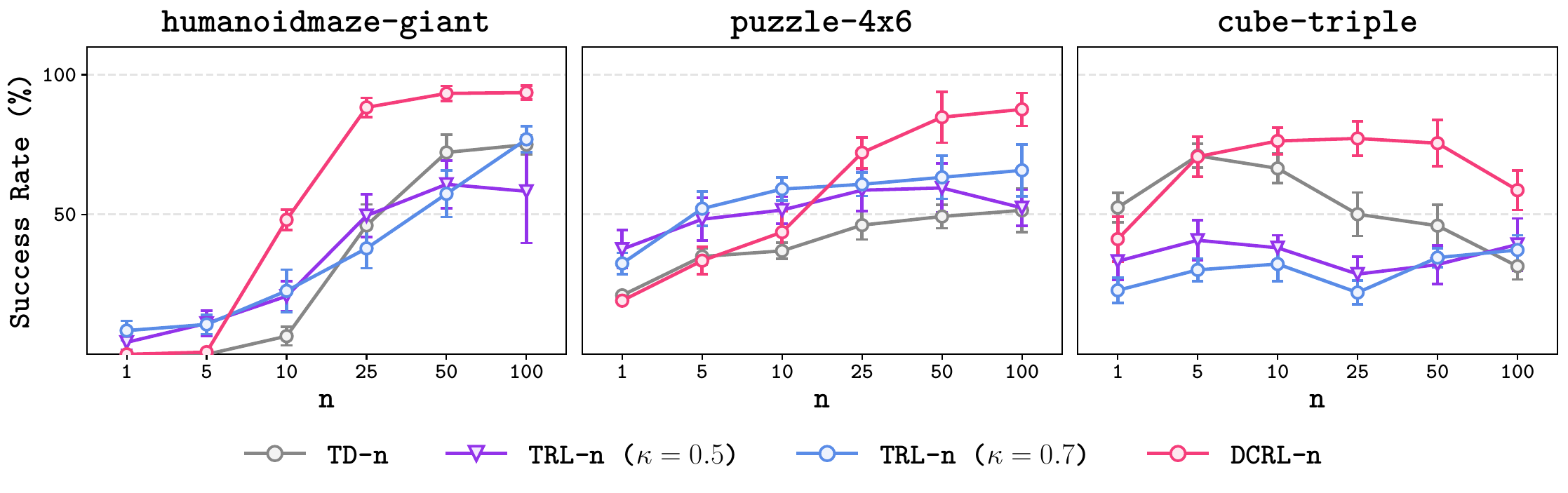}
    \caption{Exhaustive comparison across propagation horizons $n$ on three long-horizon environments. We report success rate across
    $n\in\{1,5,10,25,50,100\}$. \texttt{DCRL-n} performs best for
    $n\ge25$ on all tasks, while the two \texttt{TRL-n} variants perform similarly overall.}
    \label{fig:nstep_comparison_exhaustive}
\end{figure}

\paragraph{End-to-end propagation-horizon comparison.}
\Cref{fig:nstep_comparison_exhaustive} complements the controlled error
analysis with benchmark success rates across propagation horizons.
\texttt{DCRL-n} performs best for $n\ge25$ on all three tasks.
The two \texttt{TRL-n} variants perform similarly overall, with neither consistently outperforming the other. Thus, although symmetric
transitive regression reduces error in the controlled environment, this
benefit does not consistently translate into higher benchmark performance.

\section{Full Algorithm}
\label{app:algorithm}

\begin{algorithm}[ht]
    \caption{Divide-and-Conquer Reinforcement Learning (\method)}
    \label{alg:dcrl}
    \begin{algorithmic}[1]
        \STATE \textbf{Input:} Dataset $\mathcal{D}$, number of slots $S$, propagation horizon $n$
        \STATE Initialize critic $Q(s,a,g)$, slots $\{\mathcal{T}_\ell \gets \texttt{SampleSlot}(\mathcal{D})\}_{\ell=1}^{S}$
        \WHILE{not converged}
        \STATE \textcolor{highlight}{$\triangleright$ Divide-and-Conquer}
        \STATE Sample $\ell \sim \mathrm{Unif}(\{1,\ldots,S\})$
        \STATE Sample $\tau_{\text{DC}}=(s_i, a_i, s_k, a_k, s_j) \gets \mathcal{T}_\ell.\texttt{pop}()$
        \IF{$\mathcal{T}_\ell = \emptyset$}
        \STATE $\mathcal{T}_\ell \gets \texttt{SampleSlot}(\mathcal{D})$
        \ENDIF
        \STATE \textcolor{highlight}{$\triangleright$ Propagation}
        \STATE Sample $\tau_{\text{prop}}=(s_i, a_i, s_{i+n}, a_{i+n}, g) \sim \mathcal{D}$
        \vspace{1mm}
        \STATE Train $Q$ by minimizing $\mathcal{L}^{\texttt{dc}}(Q) + \mathcal{L}^{\texttt{prop}}(Q)$ \hfill (Eqs.~\ref{eq:dc_loss},\,\ref{eq:prop_loss})
        \vspace{1mm}
        \ENDWHILE
    \end{algorithmic}
\end{algorithm}

\section{Baselines}
\label{app:baseline}

Offline GCRL methods generally comprise two components: goal-conditioned value estimation and policy extraction. We describe the main value-learning mechanism of each baseline below and note its policy-extraction mechanism when it is essential to the method.
Our implementations of \texttt{TD}, \texttt{TD-n}, \texttt{TDP}, \texttt{COE}, and \texttt{TRL} follow \citet{park2026transitive}.

\paragraph{Temporal-difference methods.}
\texttt{IQL}~\citep{kostrikov2022offline} fits a goal-conditioned state-value function by asymmetric expectile regression over dataset actions and trains its critic with Bellman backups. The expectile approximates action maximization without evaluating out-of-distribution actions; at $\kappa=0.5$, the objective reduces to behavioral SARSA value estimation.
\texttt{SAC+BC}~\citep{haarnoja2018soft,park2025horizon} extracts a stochastic policy by maximizing the learned critic through reparameterized gradients while adding an entropy or behavior-cloning term that constrains the policy toward dataset actions.
\td\ is a one-step, IQL-style goal-conditioned TD baseline, whereas \texttt{TD-n} replaces the one-step bootstrap with an $n$-step return and clips $n$ when the sampled goal occurs sooner~\citep{sutton2018reinforcement}.
\method\ uses the same $n$ as its propagation horizon on each task; we report these values in Appendix~\ref{app:experimental_setup}.

\paragraph{Monte Carlo methods.}
\texttt{CRL}~\citep{eysenbach2022contrastive} learns a goal-conditioned critic by contrastively distinguishing future states from independently sampled states and derives the policy from this learned reachability signal.
\texttt{MC} learns a behavioral goal-conditioned value by directly regressing $Q(s_i,a_i,s_j)$ onto the empirical trajectory return $\gamma^{j-i}$, avoiding bootstrapping~\citep{tian2020model,shah2021rapid}. This estimator is simple and stable but recovers behavior rather than optimal values under suboptimal data and can have high variance under stochastic dynamics.

\paragraph{Hierarchical methods.}
\texttt{POR}~\citep{xu2022policy} is a policy-guided imitation method that uses a learned policy to select useful intermediate targets from the offline data and imitates the resulting improved decisions.
\texttt{HIQL}~\citep{park2023hiql} treats latent future states as high-level actions: a high-level policy proposes a waypoint and a low-level goal-conditioned policy reaches it, with both policies extracted by advantage-weighted regression.
\texttt{SHARSA}~\citep{park2025horizon} reduces the effective decision horizon by training a high-level subgoal policy together with a low-level policy that executes each temporally extended subgoal.

\paragraph{Triangle-inequality methods.}
\texttt{QRL}~\citep{wang2023optimal} parameterizes the goal-reaching value as a quasimetric, architecturally imposing non-negativity, asymmetry, and the triangle inequality.
\texttt{TMD}~\citep{myers2025offline} learns an asymmetric temporal-distance representation from offline data while enforcing quasimetric structure, then uses this distance to extract a goal-conditioned policy.
\texttt{TDP}~\citep{kaelbling1993learning,jurgenson2020sub} applies a Floyd--Warshall-style backup that chooses the best intermediate subgoal; in continuous domains, we approximate the hard maximum over the state space using subgoal candidates sampled from the dataset.
\texttt{COE}~\citep{pikekos2023efficient} enforces the compositional optimality equation with a separate generator that proposes the maximizing intermediate state; for offline training, the generator is regularized toward in-distribution states.
\texttt{TRL}~\citep{park2026transitive} propagates reachability by composing two shorter goal-conditioned values through an intermediate state using a transitive backup, yielding an implicit divide-and-conquer structure.

\paragraph{Policy-only methods.}
We additionally include behavior cloning (\texttt{BC}), hierarchical behavior cloning (\texttt{HBC})~\citep{gupta2019relay}, and flow-matching behavior cloning (\texttt{FBC})~\citep{chi2023diffusion}. \texttt{BC} directly imitates dataset actions, \texttt{HBC} separately imitates high-level waypoints and low-level actions, and \texttt{FBC} uses an expressive flow-matching policy to model potentially multimodal behavior distributions.

\section{Alternative Divide-and-Conquer Strategies}
\label{app:dnc_strategies}
Here we describe alternative divide-and-conquer strategies that we compare with \method\ in \cref{sec:a3}.

\paragraph{Curriculum learning.}
The simplest divide-and-conquer approach learns values on short trajectories first, gradually increasing their length via a manual curriculum. For length-$T$ trajectories with curriculum step $\Delta$, the $i$-th stage ($i=1, \dots, T/\Delta$) learns values for sub-trajectories of length $(i-1)\cdot\Delta+1$ to $i\cdot\Delta$, sampled uniformly from full trajectories.

\paragraph{Distance re-weighting.}
\citet{park2026transitive} implement divide-and-conquer implicitly by downweighting distant pairs in the loss. Each sample is weighted by $w(s_i,s_j) := \big(1 + \log_\gamma Q(s_i,a_i,s_j)\big)^{-\lambda}$, where $\log_\gamma Q$ is the learned distance. The weight decays with distance, so short pairs dominate; $\lambda$ controls the decay rate, and $\lambda = 0$ recovers uniform weighting.

\paragraph{Random subgoal sampling.}
While \method\ splits each problem at the trajectory midpoint, an alternative samples a random in-trajectory subgoal. We implement \method\ {\small (w/ random split)}, which is identical to \cref{alg:dcrl} but uses a random split $k \sim \text{Unif}(i+1, j-1)$ instead of the midpoint during recursive tree construction (\cref{alg:sampleslot}).

\paragraph{Max-based backup.}
\method\ backs up each node through the trajectory midpoint with an exact factorization, $Q(s_i,a_i,s_j) \gets Q(s_i,a_i,s_k)\,Q(s_k,a_k,s_j)$ with $k=\lfloor(i+j)/2\rfloor$. A natural alternative replaces this fixed split with a \emph{learned} one: at each node we sample $M=10$ in-trajectory subgoal candidates $m \sim \text{Unif}(i+1, j-1)$ and both split and back up at the candidate maximizing the factorized value under the current critic,
\begin{equation}
m^\star = \argmax_{m}\; Q(s_i,a_i,s_m)\,Q(s_m,a_m,s_j), \qquad Q(s_i,a_i,s_j) \;\gets\; Q(s_i,a_i,s_{m^\star})\,Q(s_{m^\star},a_{m^\star},s_j).
\end{equation}
When a segment has fewer than $M$ interior states, we simply enumerate all of them. We implement \method\ {\small (w/ max-based backup)}, identical to \cref{alg:dcrl} except that the midpoint split $k$ in the recursive tree construction (\cref{alg:sampleslot}) is replaced by $m^\star$. Because the split now depends on $Q$ and determines the node's children, the tree is constructed top-down using the current critic, and the schedule is rebuilt every $5000$ steps so the splits track the critic as it improves; the recursion, the base-case grounding $Q(s_i,a_i,s_{i+1})\!\gets\!\gamma$, and the value-propagation loss are all unchanged from \method. For the demonstrated route, the factorized target $\gamma^{m-i}\gamma^{j-m}=\gamma^{j-i}$ is independent of $m$. Maximizing learned values may nevertheless favor either shortcuts introduced by propagation or estimation error. This ablation tests whether adaptive split selection improves upon the fixed midpoint.

\paragraph{Without recursive scheduling.}
To test whether the depth-stratified, child-to-parent schedule is necessary, we ablate the recursion entirely while retaining the factorization backup. Instead of recursively decomposing each trajectory into a tree (\cref{alg:sampleslot}), we sample pairs $(s_i, s_j)$ i.i.d.\ per batch with $j - i \sim \text{Geom}(1-\gamma)$, split once at the midpoint $k=\lfloor(i+j)/2\rfloor$, and apply the same backup $Q(s_i,a_i,s_j) \gets Q(s_i,a_i,s_k)\,Q(s_k,a_k,s_j)$. 

Without recursion, the sub-problems $Q(s_i,a_i,s_k)$ and $Q(s_k,a_k,s_j)$ are no longer guaranteed to have been trained before their parent. Moreover, i.i.d.\ sampling does not explicitly provide the one-step leaves that ground each recursive tree. Without reliable base-case supervision, the factorization target becomes self-referential and the critic collapses ($\bar{Q}\!\to\!0$). We therefore add an explicit one-step loss on adjacent transitions,
\begin{equation}
\mathcal{L}_1(Q) = \mathbb{E}_{(s,a,s')\sim\mathcal{D}} \big[\, \mathcal{L}_\text{BCE}\big(Q(s,a,s'),\, \gamma\big) \,\big],
\end{equation}
which restores the exact $\gamma^1$ base case that the recursion supplies implicitly. We implement \method\ {\small (w/o recursion)} with this grounding term and the unchanged value-propagation loss. This construction isolates the absence of recursive scheduling while retaining the same factorization and propagation objectives with equivalent one-step grounding.

\paragraph{Without propagation.}
To test whether divide-and-conquer alone can recover optimal values, we remove $\mathcal{L}^{\texttt{prop}}$ and increase the divide-and-conquer expectile from $\kappa_{\texttt{dc}}=0.5$ to $0.7$. The upper expectile favors higher-valued routes across samples, tasking the recursive objective with learning optimal values rather than behavior values. All other components remain unchanged.

\paragraph{Reverse scheduling.}
We implement \method\ {\small (w/ reverse order)} by retaining the same midpoint tree, factorized backup, one-step groundings, and propagation loss as \method, but reversing each slot's traversal order so that parents are emitted before their children. This ablation isolates traversal direction while preserving the recursive tree structure.

\section{Experimental Setup}
\label{app:experimental_setup}

\paragraph{Policy Extraction.}
After training the action-value function $Q(s,a,g)$, we extract a goal-conditioned policy $\pi(s,g): \mathcal{S} \times \mathcal{S} \to \mathcal{A}$ that takes the highest-$Q$ action. Policy extraction can be performed either during or after value training.
We describe policy extraction procedures in Appendix~\ref{sec:policy_extraction}. 

\paragraph{Evaluation.}
To evaluate offline goal-conditioned RL algorithms, we use a suite of environments from OGBench~\citep{park2025ogbench} and CALVIN~\citep{mees2022calvin}.

For OGBench, each environment is evaluated on five test-time tasks corresponding to distinct start--goal pairs, with 15 rollouts per state-based task and 50 rollouts per pixel-based task.

For CALVIN, we use the in-domain Task D$\rightarrow$Task D setup and the dataset provided by \citet{shi2022skill}. We evaluate the target sequence of four consecutive subtasks---opening the drawer, turning on the lightbulb, moving the slider to the left, and turning on the LED---using 50 rollouts for the state-based variant and 100 for the pixel-based variant.

\paragraph{Pixel-based CALVIN.}
The dataset of \citet{shi2022skill} provides proprioceptive and scene states but no images, so we re-render it: every transition is replayed in the CALVIN simulator and an image is captured by the static third-person camera, yielding $128 \times 128 \times 3$ RGB observations. Rendering is done once offline and cached, so all methods consume identical pixel trajectories. Each observation comprises the image and a $15$-dimensional proprioceptive vector and is normalized using dataset statistics; the goal is the rendered image of the target scene, in which all four subtasks are complete. The evaluation protocol is otherwise unchanged from the state-based variant: we use a single fixed far goal, and a rollout succeeds at level $k$ if at least $k$ of the four subtasks are completed in the prescribed order.

\paragraph{Visual encoder.}
For pixel-based CALVIN, we use a smaller version of the Impala CNN~\citep{espeholt2018impala}, matching the encoder used for the pixel-based OGBench environments, so that architecture is not a confounding factor across benchmarks. Observations and goals are encoded by separate encoder instances, both of which are trained end-to-end with the rest of the network at the same learning rate. We apply random-crop augmentation with probability $0.5$. The padding is increased from $3$ to $6$ pixels: OGBench's pixel observations are $64 \times 64$, whereas ours are $128 \times 128$, so a $3$-pixel pad would shift the image by half the relative amount and cover a correspondingly smaller fraction of the receptive field. Scaling the padding with the resolution keeps the effective augmentation strength, in units of the CNN's coverage, comparable to the OGBench setting.

\paragraph{Long-horizon OGBench tasks.}
For large-scale, long-horizon tasks (\cref{tab:large_scale_performance}), we report the mean performance over four random seeds, averaged across checkpoints at 800k, 900k, and 1M training steps.
For \method's value propagation, we use $n=100$ on \texttt{humanoidmaze-giant-1B} and \texttt{puzzle-\{4x5,4x6\}-1B}, and $n=25$ on \texttt{cube-quadruple-100M} and \texttt{cube-octuple-1B}.
Cube tasks decompose into single-block pick-and-place subtasks, so a smaller $n$ suffices even when the full task is long.
Full long-horizon results are reported in \cref{tab:large_scale_performance_appendix}.

\begin{table}[h]
\centering
\caption{Long-horizon environment specifications.} \label{tab:env_specs}
\resizebox{\textwidth}{!}{
    \begin{tabular}{lcccc} \toprule \texttt{Environment} & \texttt{State Dim.} ($\mathcal{S}$) & \texttt{Goal Domain} ($\mathcal{G}$) & \texttt{Action Dim.} ($\mathcal{A}$) & \texttt{Episode length} \\
    \midrule
    \texttt{humanoidmaze-giant} & \texttt{69} & $\mathbb{R}^2$   & \texttt{21}  & \texttt{4000}  \\
    \texttt{puzzle-4x5}         & \texttt{99}  & $\{0,1\}^{20}$  & \texttt{5}   & \texttt{1000}  \\
    \texttt{puzzle-4x6}         & \texttt{115}  & $\{0,1\}^{24}$  & \texttt{5}  & \texttt{1000}  \\
    \texttt{cube-quadruple}     & \texttt{55}  & $\mathbb{R}^{12}$  & \texttt{5}   & \texttt{1000}  \\
    \texttt{cube-octuple}       & \texttt{91}  & $\mathbb{R}^{24}$  & \texttt{5}   & \texttt{1500}  \\
    \bottomrule
    \end{tabular}
}
\end{table}

\paragraph{Standard OGBench tasks.}
For state-based tasks (\cref{fig:ogbench_overall} (\textit{left})), we report mean performance over four random seeds, averaged across the 800k, 900k, and 1M checkpoints. 
We use $n=25$ for \method's value propagation on all standard OGBench tasks, both state-based and pixel-based.
The state-based environments include:
\begin{itemize}[leftmargin=*]
    \item \texttt{antmaze-large}, \texttt{antsoccer-arena}, \texttt{pointmaze-large}, \texttt{scene}
    \item \texttt{cube-\{single,double\}}
    \item \texttt{humanoidmaze-\{medium,large\}}
    \item \texttt{puzzle-\{3x3,4x4\}}
\end{itemize}

For pixel-based tasks (\cref{fig:ogbench_overall} (\textit{right})), we report mean performance over four random seeds, averaged across the 300k, 400k, and 500k checkpoints. The pixel-based environments include:
\begin{itemize}[leftmargin=*]
    \item \texttt{visual-scene}
    \item \texttt{visual-antmaze-\{medium,large,giant\}}
    \item \texttt{visual-cube-\{single,double,triple\}}
    \item \texttt{visual-puzzle-\{3x3,4x4,4x5\}}
\end{itemize}
Full state-based results and pixel-based results are reported in \cref{tab:standard_performance} and \cref{tab:visual_performance}.
\paragraph{CALVIN benchmark.}
In \cref{fig:calvin_results}, we report mean performance over eight random seeds at the 500k checkpoint for state-based CALVIN, and over four random seeds at the 200k checkpoint for pixel-based CALVIN. On state-based CALVIN, we train \texttt{HIQL} without a latent representation, so its high-level policy predicts subgoals directly in the $21$-dimensional observation space and is therefore comparable to the other value-learning methods, none of which learn a separate goal representation. On pixel-based CALVIN, this choice is not viable: a representation-free high-level policy would have to regress the full goal image under an isotropic Gaussian, so the BC term dominates the advantage weighting and the high-level objective degenerates into pixel reconstruction. We therefore retain \texttt{HIQL}'s latent subgoal representation in the pixel setting, with the high-level policy predicting $\phi(g)$ in the space produced by the goal encoder. To keep the comparison as close as possible to the state-based setting, $\phi$ is trained solely by the value objective rather than by an auxiliary representation loss, so \texttt{HIQL} learns no goal representation beyond what its value function induces. We use $n=25$ for \method's value propagation. Full CALVIN results are reported in \cref{tab:calvin_results_appendix} and \cref{tab:calvin_pixel_results_appendix}.

\paragraph{$n$-step comparison.}
For the $n$-step comparison (\cref{fig:result_prop_h}), we use 10 random seeds and report mean performance at the 1M checkpoint for each $n \in \{1, 5, 10, 25, 50, 100\}$.
We use \texttt{humanoidmaze-giant-1B}, \texttt{puzzle-4x6-1B}, and \texttt{cube-triple-100M}.
Full task-level results for \texttt{TD-n}, \texttt{TRL-n} {\small($\kappa=0.7$)}, and \texttt{DCRL-n} are reported in Appendix~\ref{app:propagation_horizon_comparison}.

\section{Policy Extraction}
\label{sec:policy_extraction}

We consider the following two standard policy-extraction techniques.

\textbf{Reparameterized gradients}~\citep{fujimoto2021minimalist} extract a Gaussian policy by maximizing the following combined DDPG~\citep{lillicrap2016continuous} and behavioral cloning (BC) objective:
\begin{equation}
    J_{\text{DDPG+BC}}(\pi) = \mathbb{E}_{(s, a, g) \sim \mathcal{D}, \, a^\pi \sim \pi(\cdot | s, g)} \left[ Q(s, a^\pi, g) + \alpha \log \pi(a | s, g) \right],
\end{equation}
where $a^\pi$ is a reparameterized action from the policy, $a$ is a dataset action, and $\alpha$ determines the strength of the BC regularization. This objective maximizes the predicted value while penalizing deviation from the dataset distribution to prevent out-of-distribution exploitation.

\textbf{Rejection sampling}~\citep{chen2023offline, hansen2023idql} defines a nonparametric policy by drawing $N$ action candidates and selecting the candidate with the highest value:
\begin{equation}
    \pi(s, g) \stackrel{d}{=} \arg\max_{
\{a_i\}_{i=1}^N \;\text{s.t.}\; a_i \sim \pi_\beta(\cdot|s,g)} Q(s, a_i, g),
\end{equation}
where $N$ denotes the number of candidates, $\stackrel{d}{=}$ denotes equality in distribution, and $\pi_\beta$ is a goal-conditioned BC policy. In practice, the BC policy is parameterized as an expressive diffusion or flow-matching model to capture multimodal behavior distributions effectively.

In our experiments, we use both reparameterized gradients and rejection sampling, depending on the environment. We report the exact extraction method and hyperparameters (\eg, $\alpha$ and $N$) in Appendix~\ref{sec:hyperparameters}.

\section{Additional Related Work}
\label{app:add_related_work}
\paragraph{Curriculum Learning.} 
Our recursive schedule, which learns shorter samples before longer ones, resembles a curriculum over horizon length. Curriculum learning orders tasks by difficulty~\citep{bengio2009curriculum,graves2017automated,florensa2018automatic}, requiring a task space and difficulty estimates. In contrast, \method\ decomposes purely by horizon: it recursively bisects each trajectory at its temporal midpoint, independent of the task or its difficulty. The short-to-long ordering thus emerges from the decomposition, and all lengths are trained concurrently across parallel slots rather than in sequential stages.

\clearpage
\section{Ablation Studies}
\label{app:ablation_study}

In this section, we present ablation studies on the key components of \method. 

\subsection{Target Value Sources for the Divide-and-Conquer Loss}
\label{app:target_value_source}
In our divide-and-conquer loss (\Cref{eq:dc_loss}), \method\ computes the target value by multiplying two critic estimates corresponding to the first child and the second child.
We investigate the effect of using either the \textit{online} network ($Q$) or the \textit{EMA} target network ($\bar{Q}$) for each child estimate.
As shown in \cref{fig:target_ablation}, we evaluate all four combinations: $Q \cdot \bar{Q}$, $\bar{Q} \cdot \bar{Q}$, $Q \cdot Q$, and $\bar{Q} \cdot Q$.
Across the \texttt{humanoidmaze-giant} and \texttt{puzzle-4x6} environments, the combination of the online first child and the EMA second child ($Q \cdot \bar{Q}$) achieves the best success rate.
Relying entirely on the online network ($Q \cdot Q$) causes value estimates to collapse, while the other configurations ($\bar{Q} \cdot \bar{Q}$ and $\bar{Q} \cdot Q$) degrade performance on at least one task.

\begin{figure}[ht]
    \centering
    \includegraphics[width=0.55\textwidth]{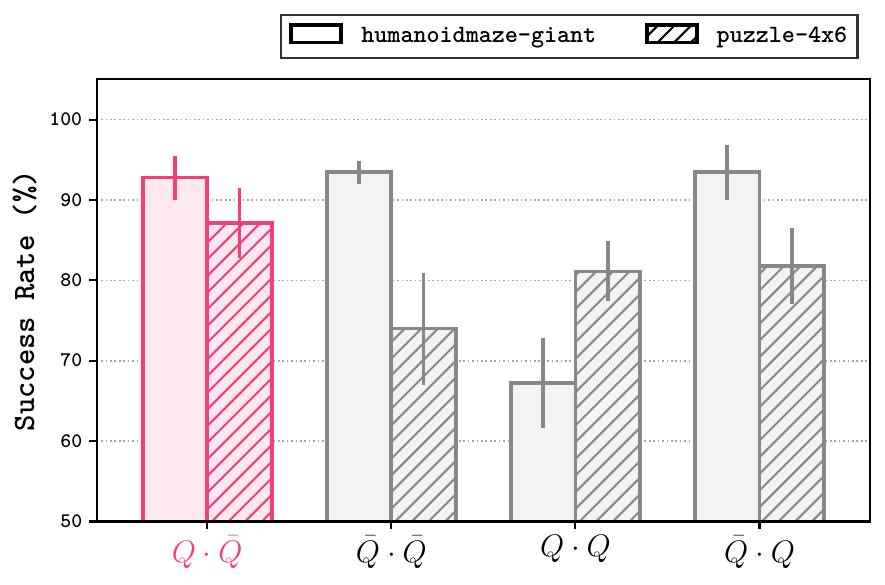}
    \caption{Using the online first child and the EMA second child ($Q \cdot \bar{Q}$) yields the most robust performance on \texttt{humanoidmaze-giant} and \texttt{puzzle-4x6}.}
    \label{fig:target_ablation}
\end{figure}
\clearpage
\subsection{Expectile Parameters}
\label{app:expectile}

We evaluate the sensitivity of \method\ to the expectile parameters $\kappa_{\texttt{dc}}$ (divide-and-conquer loss) and $\kappa_{\texttt{prop}}$ (propagation loss). 
As shown in \cref{fig:kappa_dc_ablation} and \cref{fig:kappa_prop_ablation}, \method\ achieves the best performance with $\kappa_{\texttt{dc}}=0.5$ and $\kappa_{\texttt{prop}}=0.7$ on both \texttt{humanoidmaze-giant} and \texttt{cube-triple}. Deviating from these values generally degrades performance.

\begin{figure}[ht]
    \centering
    \begin{minipage}[t]{0.48\textwidth}
        \centering
        \includegraphics[width=\linewidth]{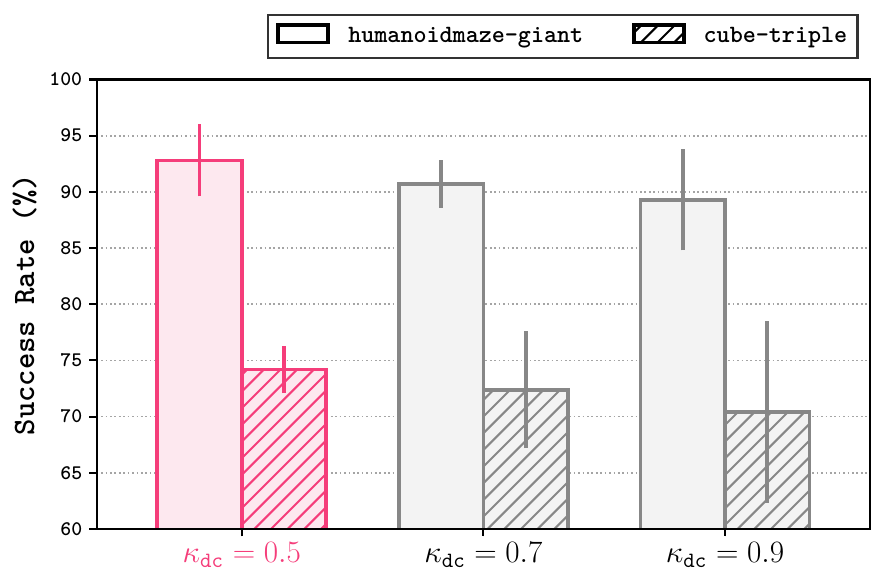}
        \caption{Ablation of the expectile parameter $\kappa_{\texttt{dc}}$ for the divide-and-conquer loss $\mathcal{L}^{\texttt{dc}}$. Performance peaks at $\kappa_{\texttt{dc}}=0.5$ (our default).}
        \label{fig:kappa_dc_ablation}
    \end{minipage}\hfill
    \begin{minipage}[t]{0.48\textwidth}
        \centering
        \includegraphics[width=\linewidth]{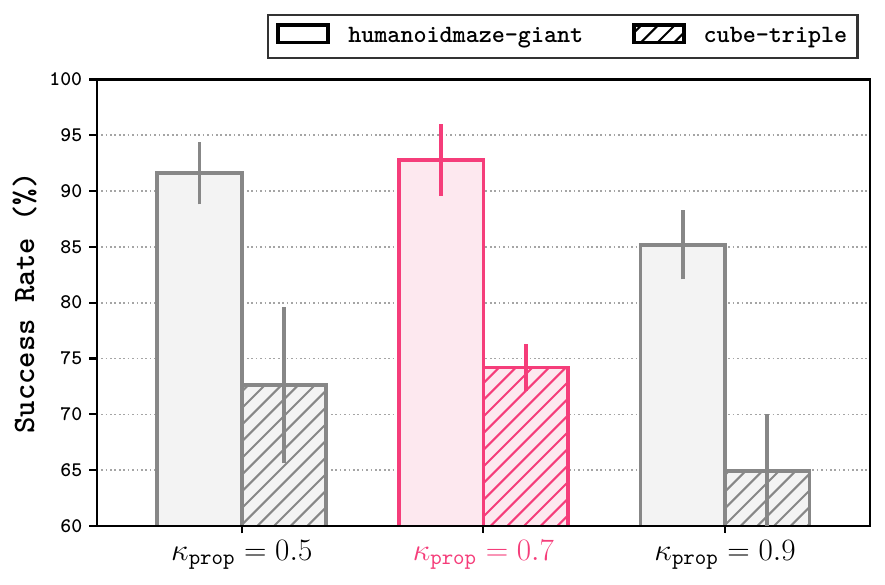}
        \caption{Ablation of the expectile parameter $\kappa_{\texttt{prop}}$ for the propagation loss $\mathcal{L}^{\texttt{prop}}$. Performance peaks at $\kappa_{\texttt{prop}}=0.7$ (our default).}
        \label{fig:kappa_prop_ablation}
    \end{minipage}
\end{figure}

\clearpage
\subsection{Number of Slots}
\label{app:number_of_slots}

We study how the number of slots affects \method's performance and training time. Larger numbers of slots incur greater CPU overhead because they must be maintained concurrently, increasing training time.
Figures~\ref{fig:a3_pareto} and \ref{fig:a3_pareto_c3} provide a Pareto analysis of \method\ and \trl\ {\small($\lambda=0$)} in \texttt{humanoidmaze-giant} and \texttt{cube-triple}.
In \texttt{humanoidmaze-giant}, \method\ reaches 90\% success in under 3 hours, whereas \trl\ remains below 80\% even after 8 hours.
In \texttt{cube-triple}, \method\ reaches 70\% success in under 2 hours, whereas \trl\ stalls below 50\% and does not improve with a larger training budget.
We measure training time on the same GPU (NVIDIA L40S).

\begin{figure}[ht]
    \begin{minipage}[c]{0.48\textwidth}
        \centering
        \captionsetup{width=\linewidth}
        \includegraphics[width=\linewidth]{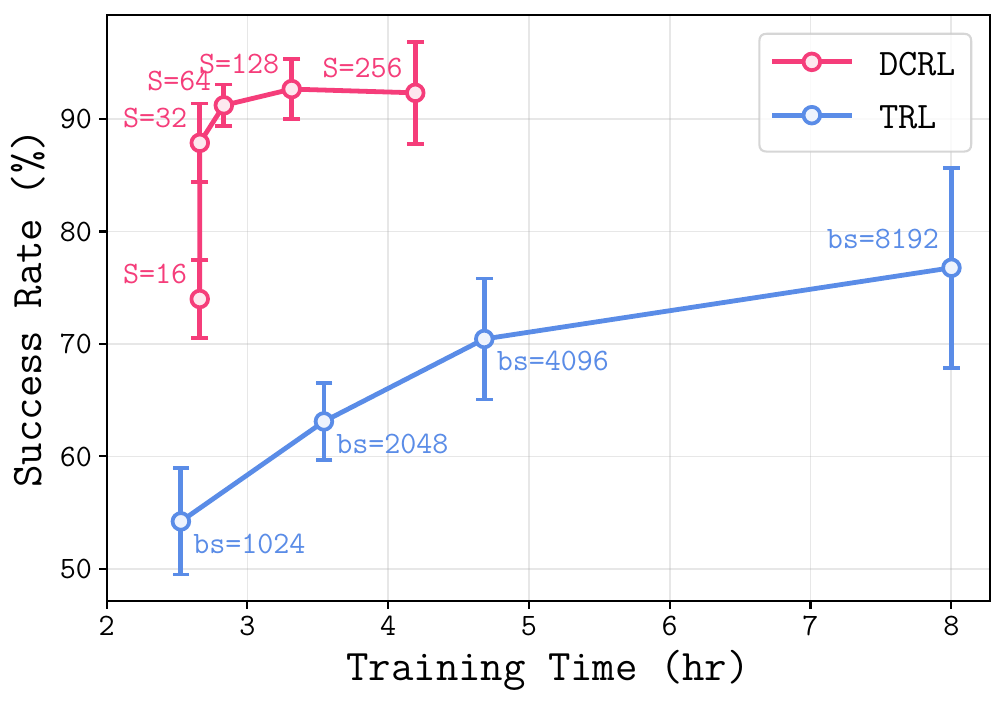}
        \caption{Across training budgets, \method\ outperforms \trl. In \texttt{humanoidmaze-giant}, we plot success rate (\%) against training time (hours), while varying the number of slots (\texttt{S=16,32,64,128,256}) for \method\ at a fixed batch size (1024), and varying the batch size (\texttt{bs=1024,2048,4096,8192}) for \trl.}
        \label{fig:a3_pareto}
    \end{minipage}\hfill
    \begin{minipage}[c]{0.48\textwidth}
        \centering
        \captionsetup{width=\linewidth}
        \includegraphics[width=\linewidth]{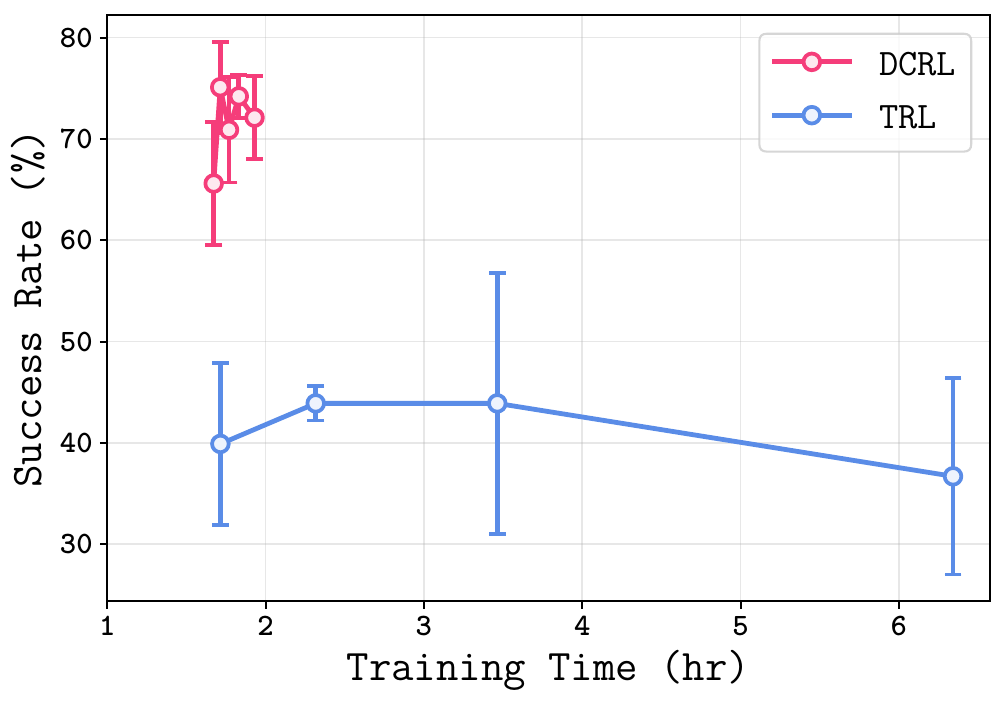}
        \caption{Same Pareto analysis as Fig.~\ref{fig:a3_pareto} in \texttt{cube-triple}. \method\ consistently achieves a better trade-off between performance and training efficiency than \trl. \method\ is trained with varying numbers of slots (\texttt{S}) and a fixed batch size (1024), while \trl\ is trained with varying batch sizes (\texttt{bs}).}
        \label{fig:a3_pareto_c3}
    \end{minipage}
\end{figure}

\clearpage
\section{Divide-and-Conquer Strategies on Cube Manipulation}
\label{app:a3_c3_comparison}

We provide additional comparisons of divide-and-conquer strategies on \texttt{cube-triple}. As shown in \cref{tab:a3_c3_100m_comparison}, the results are mostly consistent with our findings on \texttt{humanoidmaze-giant} (\cref{tab:a3_comparison}). \method\ achieves performance comparable to random splitting and about $1.7\times$ the success rate of the best \trl\ variant, with approximately 7\% lower throughput. The same pattern holds for the ablations: the random split performs no better while cutting throughput by 43\%, curriculum learning remains highly sensitive to $\Delta$, and distance re-weighting fails to improve upon \trl. We measure training throughput for Tables~\ref{tab:a3_comparison}, \ref{tab:dcrl_ablation}, and \ref{tab:a3_c3_100m_comparison} on the same GPU (NVIDIA L40S).

\begin{figure}[ht]
        \centering
        \captionsetup{width=\linewidth}
        \captionof{table}{Comparison of divide-and-conquer strategies on \texttt{cube-triple} using the same batch size (1024). We report success rate (\%) and training throughput (gradient steps per second).}
        \label{tab:a3_c3_100m_comparison}
        \resizebox{0.6\linewidth}{!}{%
        \begin{tabular}{lcc}
            \toprule
            \texttt{Method} & \texttt{Success} $\uparrow$ & \texttt{Throughput} $\uparrow$ \\
            \midrule
            \texttt{Curriculum} \small{($\Delta=25$)} & \texttt{7 \std{4}} & \texttt{\textcolor{highlight}{155} \std{2}} \\
            \texttt{Curriculum} \small{($\Delta=50$)} & \texttt{35 \std{5}} & \texttt{\textcolor{highlight}{154} \std{2}} \\
            \texttt{Curriculum} \small{($\Delta=100$)} & \texttt{15 \std{6}} & \texttt{\textcolor{highlight}{155} \std{2}} \\
            \midrule
            \trl\ \small{($\lambda=0$)} & \texttt{37 \std{8}} & \texttt{\textcolor{highlight}{161} \std{2}} \\
            \trl\ \small{($\lambda=0.01$)} & \texttt{42 \std{3}} & \texttt{\textcolor{highlight}{163} \std{2}} \\
            \trl\ \small{($\lambda=0.1$)} & \texttt{34 \std{4}} & \texttt{\textcolor{highlight}{160} \std{1}} \\
            \midrule
            \method\ \small{(w/ random split)} & \texttt{\textcolor{highlight}{72} \std{4}} & \texttt{86 \std{1}} \\
            \method\ \small{(w/ midpoint split)} & \texttt{\textcolor{highlight}{71} \std{3}} & \texttt{\textcolor{highlight}{152} \std{1}} \\
            \bottomrule
        \end{tabular}%
        }
\end{figure}

\section{Full Experimental Results}
\label{app:experimental_results}
\subsection{CALVIN Benchmark}
\label{app:calvin}
\begin{table}[ht]
    \centering
    \caption{Full results on state-based CALVIN.}
    \label{tab:calvin_results_appendix}
    \resizebox{\textwidth}{!}{%
        \begin{tabular}{@{}lcccccccccccc@{}}
            \toprule
            & & \multicolumn{2}{c}{\textbf{Temporal Difference}} & \multicolumn{2}{c}{\textbf{Monte Carlo}} & \multicolumn{3}{c}{\textbf{Hierarchical}} & \multicolumn{4}{c}{\textbf{Triangle Inequality}} \\
            \cmidrule(lr){3-4} \cmidrule(lr){5-6} \cmidrule(lr){7-9} \cmidrule(lr){10-13}
            \texttt{Metric} & \texttt{BC} & \texttt{IQL} & \texttt{TD-n} & \texttt{MC} & \texttt{CRL} & \texttt{HBC} & \texttt{POR} & \texttt{HIQL} & \texttt{QRL} & \texttt{TMD} & \texttt{TRL} & \textcolor{highlight}{\method} \\
            \midrule
            \texttt{Return} & \texttt{13.5 \std{13.2}} & \texttt{12.5 \std{18.9}} & \texttt{24.4 \std{14.6}} & \texttt{23.3 \std{10.9}} & \texttt{8.2 \std{9.5}} & \texttt{11.0 \std{12.3}} & \texttt{3.8 \std{7.2}} & \texttt{20.2 \std{24.8}} & \texttt{12.5 \std{18.9}} & \texttt{4.8 \std{9.3}} & \texttt{22.8 \std{9.7}} & \texttt{\textcolor{highlight}{35.4} \std{14.4}} \\
            \texttt{Completion} & \texttt{0.0 \std{0.0}} & \texttt{0.0 \std{0.0}} & \texttt{0.0 \std{0.0}} & \texttt{0.0 \std{0.0}} & \texttt{0.0 \std{0.0}} & \texttt{0.0 \std{0.0}} & \texttt{0.0 \std{0.0}} & \texttt{3.0 \std{8.5}} & \texttt{0.0 \std{0.0}} & \texttt{0.0 \std{0.0}} & \texttt{0.0 \std{0.0}} & \texttt{\textcolor{highlight}{8.5} \std{7.1}} \\
            \bottomrule
        \end{tabular}%
    }
\end{table}

\begin{table}[ht]
    \centering
    \caption{Full results on pixel-based CALVIN. We report the success rate
    (\%) of completing at least $k$ consecutive subtasks, together with the
    average number of subtasks completed per rollout.}
    \label{tab:calvin_pixel_results_appendix}
    \resizebox{0.6\textwidth}{!}{%
        \begin{tabular}{@{}lccccc@{}}
            \toprule
                             & \multicolumn{4}{c}{\texttt{\# of Subtasks}}                                                                          & \\
            \cmidrule(lr){2-5}
            \texttt{Method}  & \texttt{1}           & \texttt{2}           & \texttt{3}          & \texttt{4}         & \texttt{Avg.\ Subtasks} \\ \midrule
            \texttt{TD-n}    & \texttt{30 \std{22}} & \texttt{4 \std{3}}   & \texttt{0 \std{1}}  & \texttt{0 \std{0}} & \texttt{0.34 \std{0.24}} \\
            \texttt{TRL}     & \texttt{30 \std{30}} & \texttt{7 \std{9}}   & \texttt{0 \std{0}}  & \texttt{0 \std{0}} & \texttt{0.37 \std{0.39}} \\
            \texttt{HIQL}    & \texttt{\textcolor{highlight}{75} \std{50}} & \texttt{25 \std{50}} & \texttt{0 \std{0}} & \texttt{0 \std{0}} & \texttt{1.00 \std{0.82}} \\
            \textcolor{highlight}{\method} & \texttt{72 \std{26}} & \texttt{\textcolor{highlight}{32} \std{22}} & \texttt{\textcolor{highlight}{8} \std{7}} & \texttt{\textcolor{highlight}{1} \std{2}} & \texttt{\textcolor{highlight}{1.14} \std{0.54}} \\
            \bottomrule
        \end{tabular}%
    }
\end{table}

\clearpage
\subsection{Long-Horizon OGBench Tasks}
\label{app:long_horizon_tasks}
\begin{table}[ht]
    \centering
    \caption{Full results on large-scale, long-horizon OGBench tasks.}
    \label{tab:large_scale_performance_appendix}
    \resizebox{\textwidth}{!}{%
        \begin{tabular}{@{}llcccccccccccccc@{}}
            \toprule
                                 &                  &                    & \multicolumn{4}{c}{\textbf{Temporal Difference}} & \multicolumn{2}{c}{\textbf{Monte Carlo}} & \multicolumn{2}{c}{\textbf{Hierarchical}} & \multicolumn{5}{c}{\textbf{Triangle Inequality}}                                                                                                                                                                                                                                                                             \\
            \cmidrule(lr){4-7} \cmidrule(lr){8-9} \cmidrule(lr){10-11} \cmidrule(lr){12-16}
            \texttt{Environment} & \texttt{Task}    & \texttt{FBC}       & \texttt{IQL}                                     & \texttt{SAC+BC}  & \texttt{TD}                               & \texttt{TD-n}\                                   & \texttt{MC}         & \texttt{CRL}         & \texttt{HIQL}        & \texttt{SHARSA}                             & \texttt{QRL}       & \texttt{TDP}       & \texttt{COE}       & \texttt{TRL}                                & \textcolor{highlight}{\method}              \\ \midrule
            \multirow{6}{*}{\texttt{humanoidmaze-giant-1B}}
                                 & \texttt{task1}   & \texttt{0 \std{0}} & \texttt{0 \std{0}}                               & \texttt{5 \std{3}}                            & \texttt{2 \std{4}}                        & \texttt{68 \std{14}}                             & \texttt{64 \std{9}}                         & \texttt{52 \std{38}} & \texttt{16 \std{8}} & \texttt{23 \std{8}}                         & \texttt{1 \std{1}} & \texttt{0 \std{0}} & \texttt{1 \std{1}} & \texttt{71 \std{15}}                        & \texttt{\textcolor{highlight}{91} \std{6}}  \\
                                 & \texttt{task2}   & \texttt{0 \std{0}} & \texttt{3 \std{7}}                               & \texttt{12 \std{3}}                           & \texttt{8 \std{4}}                        & \texttt{90 \std{13}}                             & \texttt{87 \std{5}}                         & \texttt{63 \std{46}} & \texttt{38 \std{12}} & \texttt{46 \std{14}}                        & \texttt{2 \std{2}} & \texttt{3 \std{3}} & \texttt{2 \std{2}} & \texttt{87 \std{6}}                         & \texttt{\textcolor{highlight}{97} \std{3}}  \\
                                 & \texttt{task3}   & \texttt{0 \std{0}} & \texttt{5 \std{6}}                               & \texttt{8 \std{6}}                            & \texttt{1 \std{1}}                        & \texttt{65 \std{12}}                             & \texttt{83 \std{8}}                         & \texttt{68 \std{46}} & \texttt{26 \std{14}} & \texttt{16 \std{3}}                         & \texttt{2 \std{3}} & \texttt{3 \std{2}} & \texttt{4 \std{2}} & \texttt{44 \std{8}}                         & \texttt{\textcolor{highlight}{90} \std{5}}  \\
                                 & \texttt{task4}   & \texttt{0 \std{0}} & \texttt{2 \std{3}}                               & \texttt{2 \std{3}}                            & \texttt{2 \std{2}}                        & \texttt{78 \std{14}}                             & \texttt{78 \std{8}}                         & \texttt{57 \std{41}} & \texttt{12 \std{6}}  & \texttt{35 \std{7}}                        & \texttt{0 \std{0}} & \texttt{0 \std{0}} & \texttt{1 \std{2}} & \texttt{\textcolor{highlight}{94} \std{4}}  & \texttt{\textcolor{highlight}{96} \std{4}}  \\
                                 & \texttt{task5}   & \texttt{0 \std{0}} & \texttt{3 \std{4}}                               & \texttt{0 \std{0}}                            & \texttt{3 \std{1}}                        & \texttt{89 \std{10}}                             & \texttt{84 \std{11}}                        & \texttt{68 \std{46}} & \texttt{32 \std{7}} & \texttt{89 \std{3}}                        & \texttt{8 \std{8}} & \texttt{4 \std{2}} & \texttt{5 \std{2}} & \texttt{\textcolor{highlight}{99} \std{1}}  & \texttt{\textcolor{highlight}{94} \std{6}}  \\
                                 & \texttt{overall} & \texttt{0 \std{0}} & \texttt{3 \std{4}}                               & \texttt{5 \std{0}}                            & \texttt{3 \std{2}}                        & \texttt{78 \std{4}}                              & \texttt{79 \std{4}}                         & \texttt{62 \std{42}} & \texttt{25 \std{6}} & \texttt{42 \std{3}}                         & \texttt{3 \std{2}} & \texttt{2 \std{0}} & \texttt{2 \std{1}} & \texttt{79 \std{2}}                         & \texttt{\textcolor{highlight}{93} \std{3}}  \\ \midrule
            \multirow{6}{*}{\texttt{puzzle-4x5-1B}}
                                 & \texttt{task1}   & \texttt{0 \std{0}} & \texttt{\textcolor{highlight}{100} \std{0}}      & \texttt{\textcolor{highlight}{95} \std{3}}    & \texttt{84 \std{7}}                       & \texttt{\textcolor{highlight}{100} \std{0}}      & \texttt{\textcolor{highlight}{100} \std{0}} & \texttt{7 \std{5}}   & \texttt{58 \std{25}} & \texttt{\textcolor{highlight}{100} \std{0}} & \texttt{0 \std{0}} & \texttt{0 \std{0}} & \texttt{0 \std{0}} & \texttt{\textcolor{highlight}{100} \std{0}} & \texttt{\textcolor{highlight}{100} \std{0}} \\
                                 & \texttt{task2}   & \texttt{0 \std{0}} & \texttt{0 \std{0}}                               & \texttt{0 \std{0}}                            & \texttt{4 \std{3}}                        & \texttt{\textcolor{highlight}{98} \std{4}}       & \texttt{67 \std{12}}                        & \texttt{0 \std{0}}   & \texttt{0 \std{0}}   & \texttt{\textcolor{highlight}{99} \std{1}}                         & \texttt{0 \std{0}} & \texttt{0 \std{0}} & \texttt{0 \std{0}} & \texttt{\textcolor{highlight}{99} \std{1}}  & \texttt{\textcolor{highlight}{100} \std{0}} \\
                                 & \texttt{task3}   & \texttt{0 \std{0}} & \texttt{0 \std{0}}                               & \texttt{0 \std{0}}                            & \texttt{2 \std{2}}                        & \texttt{84 \std{28}}                             & \texttt{8 \std{10}}                         & \texttt{0 \std{0}}   & \texttt{0 \std{0}}   & \texttt{93 \std{5}}                         & \texttt{0 \std{0}} & \texttt{0 \std{0}} & \texttt{0 \std{0}} & \texttt{\textcolor{highlight}{100} \std{0}} & \texttt{\textcolor{highlight}{100} \std{0}} \\
                                 & \texttt{task4}   & \texttt{0 \std{0}} & \texttt{0 \std{0}}                               & \texttt{0 \std{0}}                            & \texttt{2 \std{2}}                        & \texttt{\textcolor{highlight}{98} \std{3}}       & \texttt{50 \std{9}}                         & \texttt{0 \std{0}}   & \texttt{0 \std{0}}   & \texttt{87 \std{5}}                         & \texttt{0 \std{0}} & \texttt{0 \std{0}} & \texttt{0 \std{0}} & \texttt{\textcolor{highlight}{99} \std{1}}  & \texttt{\textcolor{highlight}{99} \std{1}} \\
                                 & \texttt{task5}   & \texttt{0 \std{0}} & \texttt{0 \std{0}}                               & \texttt{0 \std{0}}                            & \texttt{2 \std{2}}                        & \texttt{81 \std{17}}                             & \texttt{8 \std{11}}                         & \texttt{0 \std{0}}   & \texttt{0 \std{0}}   & \texttt{72 \std{11}}                         & \texttt{0 \std{0}} & \texttt{0 \std{0}} & \texttt{0 \std{0}} & \texttt{88 \std{8}}                         & \texttt{\textcolor{highlight}{99} \std{1}}  \\
                                 & \texttt{overall} & \texttt{0 \std{0}} & \texttt{20 \std{0}}                              & \texttt{19 \std{1}}           & \texttt{19 \std{1}}                       & \texttt{92 \std{8}}                              & \texttt{47 \std{7}}                         & \texttt{1 \std{1}}   & \texttt{12 \std{5}}   & \texttt{90 \std{4}}                         & \texttt{0 \std{0}} & \texttt{0 \std{0}} & \texttt{0 \std{0}} & \texttt{\textcolor{highlight}{97} \std{1}}  & \texttt{\textcolor{highlight}{100} \std{0}} \\ \midrule
            \multirow{6}{*}{\texttt{puzzle-4x6-1B}}
                                 & \texttt{task1}   & \texttt{0 \std{0}} & \texttt{87 \std{9}}                              & \texttt{48 \std{39}}       & \texttt{61 \std{16}}                      & \texttt{\textcolor{highlight}{100} \std{0}}      & \texttt{\textcolor{highlight}{98} \std{4}}  & \texttt{0 \std{0}}   & \texttt{17 \std{12}} & \texttt{\textcolor{highlight}{99} \std{1}} & \texttt{0 \std{0}} & \texttt{0 \std{0}} & \texttt{0 \std{0}} & \texttt{\textcolor{highlight}{100} \std{0}} & \texttt{\textcolor{highlight}{100} \std{0}} \\
                                 & \texttt{task2}   & \texttt{0 \std{0}} & \texttt{0 \std{0}}                               & \texttt{5 \std{10}}                           & \texttt{2 \std{2}}                        & \texttt{39 \std{21}}                             & \texttt{46 \std{30}}                        & \texttt{0 \std{0}}   & \texttt{4 \std{2}}   & \texttt{61 \std{13}}                        & \texttt{0 \std{0}} & \texttt{0 \std{0}} & \texttt{0 \std{0}} & \texttt{66 \std{13}}                        & \texttt{\textcolor{highlight}{99} \std{2}}  \\
                                 & \texttt{task3}   & \texttt{0 \std{0}} & \texttt{0 \std{0}}                               & \texttt{0 \std{0}}                            & \texttt{0 \std{0}}                        & \texttt{81 \std{21}}                             & \texttt{34 \std{10}}                        & \texttt{0 \std{0}}   & \texttt{0 \std{0}}   & \texttt{71 \std{15}}                        & \texttt{0 \std{0}} & \texttt{0 \std{0}} & \texttt{0 \std{0}} & \texttt{67 \std{21}}                        & \texttt{\textcolor{highlight}{100} \std{0}} \\
                                 & \texttt{task4}   & \texttt{0 \std{0}} & \texttt{0 \std{0}}                               & \texttt{0 \std{0}}                            & \texttt{1 \std{1}}                        & \texttt{46 \std{28}}                             & \texttt{5 \std{6}}                          & \texttt{0 \std{0}}   & \texttt{0 \std{0}}   & \texttt{45 \std{7}}                        & \texttt{0 \std{0}} & \texttt{0 \std{0}} & \texttt{0 \std{0}} & \texttt{23 \std{7}}                         & \texttt{\textcolor{highlight}{100} \std{0}}  \\
                                 & \texttt{task5}   & \texttt{0 \std{0}} & \texttt{0 \std{0}}                               & \texttt{0 \std{0}}                            & \texttt{0 \std{0}}                        & \texttt{0 \std{0}}                               & \texttt{0 \std{0}}                          & \texttt{0 \std{0}}   & \texttt{0 \std{0}}   & \texttt{1 \std{2}}                          & \texttt{0 \std{0}} & \texttt{0 \std{0}} & \texttt{0 \std{0}} & \texttt{0 \std{0}}                          & \texttt{\textcolor{highlight}{37} \std{24}} \\
                                 & \texttt{overall} & \texttt{0 \std{0}} & \texttt{17 \std{2}}                              & \texttt{11 \std{8}} & \texttt{13 \std{3}}                       & \texttt{53 \std{11}}                             & \texttt{37 \std{4}}                         & \texttt{0 \std{0}}   & \texttt{4 \std{3}}   & \texttt{55 \std{3}}                         & \texttt{0 \std{0}} & \texttt{0 \std{0}} & \texttt{0 \std{0}} & \texttt{51 \std{5}}                         & \texttt{\textcolor{highlight}{87} \std{5}}  \\ \midrule
            \multirow{6}{*}{\texttt{cube-quadruple-100M}}
                                 & \texttt{task1}   & \texttt{3 \std{2}} & \texttt{94 \std{3}} & \texttt{59 \std{13}}  & \texttt{66 \std{6}} & \texttt{51 \std{10}} & \texttt{5 \std{4}} & \texttt{56 \std{6}} & \texttt{88 \std{7}} & \texttt{\textcolor{highlight}{100} \std{0}} & \texttt{0 \std{0}} & \texttt{0 \std{0}} & \texttt{0 \std{0}} & \texttt{1 \std{1}} & \texttt{59 \std{7}} \\
                                 & \texttt{task2}   & \texttt{0 \std{1}} & \texttt{39 \std{4}} & \texttt{77 \std{8}}  & \texttt{25 \std{9}} & \texttt{44 \std{9}} & \texttt{0 \std{0}} & \texttt{46 \std{8}} & \texttt{79 \std{1}} & \texttt{\textcolor{highlight}{100} \std{0}} & \texttt{0 \std{0}} & \texttt{0 \std{0}} & \texttt{0 \std{0}} & \texttt{0 \std{0}} & \texttt{69 \std{8}} \\
                                 & \texttt{task3}   & \texttt{0 \std{0}} & \texttt{63 \std{5}} & \texttt{34 \std{9}} & \texttt{8 \std{3}} & \texttt{9 \std{4}} & \texttt{0 \std{0}} & \texttt{2 \std{1}} & \texttt{69 \std{13}} & \texttt{\textcolor{highlight}{77} \std{3}} & \texttt{0 \std{0}} & \texttt{0 \std{0}} & \texttt{0 \std{0}} & \texttt{0 \std{0}} & \texttt{25 \std{12}} \\
                                 & \texttt{task4}   & \texttt{0 \std{0}} & \texttt{5 \std{2}} & \texttt{5 \std{2}} & \texttt{1 \std{1}} & \texttt{4 \std{3}} & \texttt{0 \std{0}} & \texttt{1 \std{0}} & \texttt{32 \std{9}} & \texttt{\textcolor{highlight}{34} \std{4}} & \texttt{0 \std{0}} & \texttt{0 \std{0}} & \texttt{0 \std{0}} & \texttt{0 \std{0}} & \texttt{7 \std{4}} \\
                                 & \texttt{task5}   & \texttt{0 \std{0}} & \texttt{1 \std{1}} & \texttt{21 \std{8}} & \texttt{1 \std{1}} & \texttt{12 \std{7}} & \texttt{0 \std{0}} & \texttt{2 \std{2}} & \texttt{38 \std{5}} & \texttt{\textcolor{highlight}{61} \std{6}} & \texttt{0 \std{0}} & \texttt{0 \std{0}} & \texttt{0 \std{0}} & \texttt{0 \std{0}} & \texttt{27 \std{3}} \\
                                 & \texttt{overall} & \texttt{1 \std{0}} & \texttt{41 \std{3}} & \texttt{39 \std{4}} & \texttt{20 \std{2}} & \texttt{21 \std{6}} & \texttt{1 \std{1}} & \texttt{22 \std{2}} & \texttt{62 \std{6}} & \texttt{\textcolor{highlight}{74} \std{2}} & \texttt{0 \std{0}} & \texttt{0 \std{0}} & \texttt{0 \std{0}} & \texttt{0 \std{0}} & \texttt{37 \std{4}} \\ \midrule
            \multirow{6}{*}{\texttt{cube-octuple-1B}}
                                 & \texttt{task1}   & \texttt{0 \std{0}} & \texttt{0 \std{0}}                              & \texttt{0 \std{0}}                         & \texttt{0 \std{0}}                      & \texttt{0 \std{0}}      & \texttt{0 \std{0}}  & \texttt{0 \std{0}}   & \texttt{1 \std{2}} & \texttt{\textcolor{highlight}{74} \std{5}} & \texttt{0 \std{0}} & \texttt{0 \std{0}} & \texttt{0 \std{0}} & \texttt{0 \std{0}} & \texttt{23 \std{7}} \\
                                 & \texttt{task2}   & \texttt{0 \std{0}} & \texttt{0 \std{0}}                                                     & \texttt{0 \std{0}} & \texttt{0 \std{0}}                        & \texttt{0 \std{0}}                             & \texttt{0 \std{0}}                        & \texttt{0 \std{0}}   & \texttt{0 \std{0}}   & \texttt{0 \std{0}}                        & \texttt{0 \std{0}} & \texttt{0 \std{0}} & \texttt{0 \std{0}} & \texttt{0 \std{0}}                        & \texttt{0 \std{0}}  \\
                                 & \texttt{task3}   & \texttt{0 \std{0}} & \texttt{0 \std{0}}                                                       & \texttt{0 \std{0}} & \texttt{0 \std{0}}                        & \texttt{0 \std{0}}                             & \texttt{0 \std{0}}                        & \texttt{0 \std{0}}   & \texttt{0 \std{0}}   & \texttt{1 \std{1}}                        & \texttt{0 \std{0}} & \texttt{0 \std{0}} & \texttt{0 \std{0}} & \texttt{0 \std{0}}                        & \texttt{0 \std{0}} \\
                                 & \texttt{task4}   & \texttt{0 \std{0}} & \texttt{0 \std{0}}                                                        & \texttt{0 \std{0}} & \texttt{0 \std{0}}                        & \texttt{0 \std{0}}                             & \texttt{0 \std{0}}                          & \texttt{0 \std{0}}   & \texttt{0 \std{0}}   & \texttt{0 \std{0}}                        & \texttt{0 \std{0}} & \texttt{0 \std{0}} & \texttt{0 \std{0}} & \texttt{0 \std{0}}                         & \texttt{0 \std{0}}  \\
                                 & \texttt{task5}   & \texttt{0 \std{0}} & \texttt{0 \std{0}}                                     & \texttt{0 \std{0}} & \texttt{0 \std{0}}                        & \texttt{0 \std{0}}                               & \texttt{0 \std{0}}                          & \texttt{0 \std{0}}   & \texttt{0 \std{0}}   & \texttt{0 \std{0}}                          & \texttt{0 \std{0}} & \texttt{0 \std{0}} & \texttt{0 \std{0}} & \texttt{0 \std{0}}                          & \texttt{0 \std{0}} \\
                                 & \texttt{overall} & \texttt{0 \std{0}} & \texttt{0 \std{0}}                              & \texttt{0 \std{0}}                          & \texttt{0 \std{0}}                       & \texttt{0 \std{0}}                             & \texttt{0 \std{0}}                         & \texttt{0 \std{0}}   & \texttt{0 \std{0}}   & \texttt{\textcolor{highlight}{15} \std{1}}                         & \texttt{0 \std{0}} & \texttt{0 \std{0}} & \texttt{0 \std{0}} & \texttt{0 \std{0}}                         & \texttt{5 \std{1}}  \\
            \bottomrule
        \end{tabular}%
    }
\end{table}

\clearpage
\subsection{Propagation Horizon Comparison}
\label{app:propagation_horizon_comparison}
\begin{table}[ht]
\NStepTable{1}{%
    \multirow{6}{*}{\texttt{humanoidmaze-giant-1B}}
    & \texttt{task1}        & \texttt{0 \std{0}}                        & \texttt{0 \std{0}}                        & \texttt{0 \std{0}} \\
    & \texttt{task2}        & \texttt{0 \std{0}}                        & \texttt{0 \std{0}}                        & \texttt{0 \std{0}} \\
    & \texttt{task3}        & \texttt{0 \std{0}}                        & \texttt{0 \std{0}}                        & \texttt{0 \std{0}} \\
    & \texttt{task4}        & \texttt{0 \std{0}}                        & \texttt{0 \std{0}}                        & \texttt{0 \std{0}} \\
    & \texttt{task5}        & \texttt{0 \std{0}}                        & \texttt{\textcolor{highlight}{42} \std{18}}& \texttt{0 \std{0}} \\
    & \texttt{overall}      & \texttt{0 \std{0}}                        & \texttt{\textcolor{highlight}{8} \std{4}} & \texttt{0 \std{0}} \\ \midrule
    \multirow{6}{*}{\texttt{puzzle-4x5-1B}}
    & \texttt{task1}        & \texttt{\textcolor{highlight}{100} \std{0}}& \texttt{\textcolor{highlight}{100} \std{1}}& \texttt{\textcolor{highlight}{100} \std{0}} \\
    & \texttt{task2}        & \texttt{1 \std{2}}                        & \texttt{\textcolor{highlight}{85} \std{16}}& \texttt{0 \std{0}} \\
    & \texttt{task3}        & \texttt{0 \std{0}}                        & \texttt{\textcolor{highlight}{67} \std{29}}& \texttt{0 \std{0}} \\
    & \texttt{task4}        & \texttt{0 \std{0}}                        & \texttt{\textcolor{highlight}{96} \std{6}}& \texttt{0 \std{0}} \\
    & \texttt{task5}        & \texttt{0 \std{0}}                        & \texttt{\textcolor{highlight}{63} \std{24}}& \texttt{0 \std{0}} \\
    & \texttt{overall}      & \texttt{20 \std{0}}                       & \texttt{\textcolor{highlight}{82} \std{15}}& \texttt{20 \std{0}} \\ \midrule
    \multirow{6}{*}{\texttt{puzzle-4x6-1B}}
    & \texttt{task1}        & \texttt{\textcolor{highlight}{99} \std{1}}& \texttt{\textcolor{highlight}{99} \std{2}}& \texttt{\textcolor{highlight}{99} \std{2}} \\
    & \texttt{task2}        & \texttt{6 \std{5}}                        & \texttt{\textcolor{highlight}{28} \std{10}}& \texttt{3 \std{4}} \\
    & \texttt{task3}        & \texttt{0 \std{0}}                        & \texttt{\textcolor{highlight}{23} \std{13}}& \texttt{0 \std{0}} \\
    & \texttt{task4}        & \texttt{0 \std{0}}                        & \texttt{\textcolor{highlight}{13} \std{12}}& \texttt{0 \std{0}} \\
    & \texttt{task5}        & \texttt{0 \std{0}}                        & \texttt{0 \std{0}}                        & \texttt{0 \std{0}} \\
    & \texttt{overall}      & \texttt{21 \std{1}}                       & \texttt{\textcolor{highlight}{33} \std{4}}& \texttt{20 \std{1}} \\ \midrule
    \multirow{6}{*}{\texttt{cube-triple-100M}}
    & \texttt{single\_pnp}  & \texttt{\textcolor{highlight}{95} \std{7}}& \texttt{68 \std{14}}                      & \texttt{89 \std{8}} \\
    & \texttt{triple\_pnp}  & \texttt{\textcolor{highlight}{61} \std{14}}& \texttt{15 \std{8}}                       & \texttt{42 \std{13}} \\
    & \texttt{from\_stack}  & \texttt{\textcolor{highlight}{54} \std{9}}& \texttt{27 \std{11}}                      & \texttt{30 \std{11}} \\
    & \texttt{cycle}        & \texttt{\textcolor{highlight}{26} \std{8}}& \texttt{2 \std{2}}                        & \texttt{5 \std{2}} \\
    & \texttt{stack}        & \texttt{26 \std{9}}                       & \texttt{2 \std{3}}                        & \texttt{\textcolor{highlight}{36} \std{6}} \\
    & \texttt{overall}      & \texttt{\textcolor{highlight}{52} \std{5}}& \texttt{23 \std{5}}                       & \texttt{40 \std{5}} \\
}
\hfill
\NStepTable{5}{%
    \multirow{6}{*}{\texttt{humanoidmaze-giant-1B}}
    & \texttt{task1}        & \texttt{0 \std{0}}                        & \texttt{0 \std{0}}                        & \texttt{0 \std{0}} \\
    & \texttt{task2}        & \texttt{0 \std{0}}                        & \texttt{0 \std{0}}                        & \texttt{\textcolor{highlight}{2} \std{2}} \\
    & \texttt{task3}        & \texttt{0 \std{0}}                        & \texttt{0 \std{0}}                        & \texttt{\textcolor{highlight}{1} \std{2}} \\
    & \texttt{task4}        & \texttt{0 \std{0}}                        & \texttt{0 \std{0}}                        & \texttt{\textcolor{highlight}{1} \std{2}} \\
    & \texttt{task5}        & \texttt{0 \std{0}}                        & \texttt{\textcolor{highlight}{54} \std{18}}& \texttt{0 \std{0}} \\
    & \texttt{overall}      & \texttt{0 \std{0}}                        & \texttt{\textcolor{highlight}{11} \std{4}}& \texttt{1 \std{1}} \\ \midrule
    \multirow{6}{*}{\texttt{puzzle-4x5-1B}}
    & \texttt{task1}        & \texttt{\textcolor{highlight}{100} \std{0}}& \texttt{\textcolor{highlight}{100} \std{0}}& \texttt{\textcolor{highlight}{100} \std{0}} \\
    & \texttt{task2}        & \texttt{80 \std{12}}                      & \texttt{\textcolor{highlight}{95} \std{6}}& \texttt{82 \std{12}} \\
    & \texttt{task3}        & \texttt{34 \std{17}}                      & \texttt{\textcolor{highlight}{87} \std{14}}& \texttt{27 \std{17}} \\
    & \texttt{task4}        & \texttt{67 \std{14}}                      & \texttt{\textcolor{highlight}{97} \std{5}}& \texttt{42 \std{17}} \\
    & \texttt{task5}        & \texttt{15 \std{10}}                      & \texttt{\textcolor{highlight}{75} \std{21}}& \texttt{10 \std{9}} \\
    & \texttt{overall}      & \texttt{59 \std{9}}                       & \texttt{\textcolor{highlight}{91} \std{9}}& \texttt{52 \std{8}} \\ \midrule
    \multirow{6}{*}{\texttt{puzzle-4x6-1B}}
    & \texttt{task1}        & \texttt{\textcolor{highlight}{100} \std{0}}& \texttt{\textcolor{highlight}{96} \std{6}}& \texttt{\textcolor{highlight}{100} \std{0}} \\
    & \texttt{task2}        & \texttt{\textcolor{highlight}{70} \std{13}}& \texttt{31 \std{17}}                      & \texttt{61 \std{14}} \\
    & \texttt{task3}        & \texttt{5 \std{4}}                        & \texttt{\textcolor{highlight}{76} \std{17}}& \texttt{4 \std{5}} \\
    & \texttt{task4}        & \texttt{0 \std{0}}                        & \texttt{\textcolor{highlight}{58} \std{20}}& \texttt{0 \std{0}} \\
    & \texttt{task5}        & \texttt{0 \std{0}}                        & \texttt{0 \std{0}}                        & \texttt{0 \std{0}} \\
    & \texttt{overall}      & \texttt{35 \std{3}}                       & \texttt{\textcolor{highlight}{52} \std{6}}& \texttt{33 \std{2}} \\ \midrule
    \multirow{6}{*}{\texttt{cube-triple-100M}}
    & \texttt{single\_pnp}  & \texttt{\textcolor{highlight}{85} \std{8}}& \texttt{75 \std{8}}                       & \texttt{\textcolor{highlight}{82} \std{8}} \\
    & \texttt{triple\_pnp}  & \texttt{\textcolor{highlight}{77} \std{10}}& \texttt{31 \std{10}}                      & \texttt{\textcolor{highlight}{79} \std{9}} \\
    & \texttt{from\_stack}  & \texttt{69 \std{9}}                       & \texttt{24 \std{10}}                      & \texttt{\textcolor{highlight}{73} \std{10}} \\
    & \texttt{cycle}        & \texttt{\textcolor{highlight}{47} \std{11}}& \texttt{4 \std{4}}                        & \texttt{40 \std{10}} \\
    & \texttt{stack}        & \texttt{\textcolor{highlight}{77} \std{8}}& \texttt{17 \std{8}}                       & \texttt{65 \std{10}} \\
    & \texttt{overall}      & \texttt{\textcolor{highlight}{71} \std{4}}& \texttt{30 \std{4}}                       & \texttt{\textcolor{highlight}{68} \std{6}} \\
}
\end{table}

\begin{table}[ht]
\NStepTable{10}{%
    \multirow{6}{*}{\texttt{humanoidmaze-giant-1B}}
    & \texttt{task1}        & \texttt{12 \std{9}}                       & \texttt{3 \std{4}}                        & \texttt{\textcolor{highlight}{59} \std{10}} \\
    & \texttt{task2}        & \texttt{4 \std{7}}                        & \texttt{20 \std{21}}                      & \texttt{\textcolor{highlight}{84} \std{7}} \\
    & \texttt{task3}        & \texttt{2 \std{3}}                        & \texttt{2 \std{4}}                        & \texttt{\textcolor{highlight}{45} \std{8}} \\
    & \texttt{task4}        & \texttt{13 \std{10}}                      & \texttt{14 \std{18}}                      & \texttt{\textcolor{highlight}{47} \std{10}} \\
    & \texttt{task5}        & \texttt{2 \std{4}}                        & \texttt{\textcolor{highlight}{74} \std{17}}& \texttt{5 \std{4}} \\
    & \texttt{overall}      & \texttt{6 \std{3}}                        & \texttt{23 \std{8}}                       & \texttt{\textcolor{highlight}{48} \std{4}} \\ \midrule
    \multirow{6}{*}{\texttt{puzzle-4x5-1B}}
    & \texttt{task1}        & \texttt{\textcolor{highlight}{100} \std{0}}& \texttt{\textcolor{highlight}{100} \std{0}}& \texttt{\textcolor{highlight}{100} \std{0}} \\
    & \texttt{task2}        & \texttt{\textcolor{highlight}{99} \std{2}}& \texttt{\textcolor{highlight}{99} \std{4}}& \texttt{\textcolor{highlight}{100} \std{0}} \\
    & \texttt{task3}        & \texttt{\textcolor{highlight}{95} \std{7}}& \texttt{\textcolor{highlight}{95} \std{7}}& \texttt{\textcolor{highlight}{100} \std{0}} \\
    & \texttt{task4}        & \texttt{\textcolor{highlight}{96} \std{4}}& \texttt{\textcolor{highlight}{99} \std{2}}& \texttt{\textcolor{highlight}{95} \std{8}} \\
    & \texttt{task5}        & \texttt{48 \std{12}}                      & \texttt{\textcolor{highlight}{91} \std{10}}& \texttt{\textcolor{highlight}{94} \std{6}} \\
    & \texttt{overall}      & \texttt{88 \std{3}}                       & \texttt{\textcolor{highlight}{97} \std{4}}& \texttt{\textcolor{highlight}{98} \std{2}} \\ \midrule
    \multirow{6}{*}{\texttt{puzzle-4x6-1B}}
    & \texttt{task1}        & \texttt{\textcolor{highlight}{100} \std{0}}& \texttt{\textcolor{highlight}{100} \std{1}}& \texttt{\textcolor{highlight}{100} \std{0}} \\
    & \texttt{task2}        & \texttt{\textcolor{highlight}{72} \std{9}}& \texttt{32 \std{20}}                      & \texttt{58 \std{24}} \\
    & \texttt{task3}        & \texttt{13 \std{12}}                      & \texttt{\textcolor{highlight}{81} \std{20}}& \texttt{55 \std{22}} \\
    & \texttt{task4}        & \texttt{0 \std{1}}                        & \texttt{\textcolor{highlight}{83} \std{11}}& \texttt{14 \std{11}} \\
    & \texttt{task5}        & \texttt{0 \std{0}}                        & \texttt{0 \std{0}}                        & \texttt{0 \std{0}} \\
    & \texttt{overall}      & \texttt{37 \std{3}}                       & \texttt{\textcolor{highlight}{59} \std{4}}& \texttt{45 \std{8}} \\ \midrule
    \multirow{6}{*}{\texttt{cube-triple-100M}}
    & \texttt{single\_pnp}  & \texttt{77 \std{9}}                       & \texttt{70 \std{9}}                       & \texttt{\textcolor{highlight}{86} \std{7}} \\
    & \texttt{triple\_pnp}  & \texttt{69 \std{8}}                       & \texttt{34 \std{14}}                      & \texttt{\textcolor{highlight}{88} \std{8}} \\
    & \texttt{from\_stack}  & \texttt{67 \std{9}}                       & \texttt{32 \std{9}}                       & \texttt{\textcolor{highlight}{86} \std{8}} \\
    & \texttt{cycle}        & \texttt{\textcolor{highlight}{42} \std{14}}& \texttt{4 \std{4}}                        & \texttt{\textcolor{highlight}{42} \std{11}} \\
    & \texttt{stack}        & \texttt{78 \std{7}}                       & \texttt{21 \std{6}}                       & \texttt{\textcolor{highlight}{83} \std{9}} \\
    & \texttt{overall}      & \texttt{67 \std{5}}                       & \texttt{32 \std{6}}                       & \texttt{\textcolor{highlight}{77} \std{5}} \\
}
\hfill
\NStepTable{25}{%
    \multirow{6}{*}{\texttt{humanoidmaze-giant-1B}}
    & \texttt{task1}        & \texttt{41 \std{14}}                      & \texttt{15 \std{13}}                      & \texttt{\textcolor{highlight}{89} \std{5}} \\
    & \texttt{task2}        & \texttt{54 \std{16}}                      & \texttt{52 \std{12}}                      & \texttt{\textcolor{highlight}{97} \std{3}} \\
    & \texttt{task3}        & \texttt{42 \std{11}}                      & \texttt{12 \std{12}}                      & \texttt{\textcolor{highlight}{80} \std{11}} \\
    & \texttt{task4}        & \texttt{43 \std{16}}                      & \texttt{27 \std{18}}                      & \texttt{\textcolor{highlight}{88} \std{8}} \\
    & \texttt{task5}        & \texttt{50 \std{21}}                      & \texttt{83 \std{6}}                       & \texttt{\textcolor{highlight}{88} \std{14}} \\
    & \texttt{overall}      & \texttt{46 \std{7}}                       & \texttt{38 \std{7}}                       & \texttt{\textcolor{highlight}{88} \std{3}} \\ \midrule
    \multirow{6}{*}{\texttt{puzzle-4x5-1B}}
    & \texttt{task1}        & \texttt{\textcolor{highlight}{100} \std{0}}& \texttt{\textcolor{highlight}{100} \std{0}}& \texttt{\textcolor{highlight}{100} \std{0}} \\
    & \texttt{task2}        & \texttt{\textcolor{highlight}{99} \std{2}}& \texttt{\textcolor{highlight}{98} \std{3}}& \texttt{\textcolor{highlight}{100} \std{0}} \\
    & \texttt{task3}        & \texttt{\textcolor{highlight}{99} \std{2}}& \texttt{91 \std{16}}                      & \texttt{\textcolor{highlight}{100} \std{1}} \\
    & \texttt{task4}        & \texttt{\textcolor{highlight}{98} \std{2}}& \texttt{\textcolor{highlight}{97} \std{2}}& \texttt{\textcolor{highlight}{99} \std{2}} \\
    & \texttt{task5}        & \texttt{92 \std{6}}                       & \texttt{93 \std{12}}                      & \texttt{\textcolor{highlight}{100} \std{1}} \\
    & \texttt{overall}      & \texttt{\textcolor{highlight}{98} \std{2}}& \texttt{\textcolor{highlight}{96} \std{6}}& \texttt{\textcolor{highlight}{100} \std{1}} \\ \midrule
    \multirow{6}{*}{\texttt{puzzle-4x6-1B}}
    & \texttt{task1}        & \texttt{\textcolor{highlight}{100} \std{0}}& \texttt{\textcolor{highlight}{99} \std{2}}& \texttt{\textcolor{highlight}{100} \std{0}} \\
    & \texttt{task2}        & \texttt{\textcolor{highlight}{63} \std{16}}& \texttt{33 \std{16}}                      & \texttt{\textcolor{highlight}{62} \std{28}} \\
    & \texttt{task3}        & \texttt{56 \std{17}}                      & \texttt{89 \std{4}}                       & \texttt{\textcolor{highlight}{99} \std{2}} \\
    & \texttt{task4}        & \texttt{12 \std{11}}                      & \texttt{83 \std{14}}                      & \texttt{\textcolor{highlight}{88} \std{11}} \\
    & \texttt{task5}        & \texttt{0 \std{0}}                        & \texttt{0 \std{0}}                        & \texttt{\textcolor{highlight}{6} \std{7}} \\
    & \texttt{overall}      & \texttt{46 \std{5}}                       & \texttt{61 \std{4}}                       & \texttt{\textcolor{highlight}{71} \std{7}} \\ \midrule
    \multirow{6}{*}{\texttt{cube-triple-100M}}
    & \texttt{single\_pnp}  & \texttt{65 \std{15}}                      & \texttt{57 \std{14}}                      & \texttt{\textcolor{highlight}{82} \std{15}} \\
    & \texttt{triple\_pnp}  & \texttt{44 \std{15}}                      & \texttt{22 \std{7}}                       & \texttt{\textcolor{highlight}{73} \std{8}} \\
    & \texttt{from\_stack}  & \texttt{49 \std{11}}                      & \texttt{19 \std{11}}                      & \texttt{\textcolor{highlight}{76} \std{6}} \\
    & \texttt{cycle}        & \texttt{20 \std{10}}                      & \texttt{2 \std{2}}                        & \texttt{\textcolor{highlight}{41} \std{10}} \\
    & \texttt{stack}        & \texttt{72 \std{5}}                       & \texttt{12 \std{4}}                       & \texttt{\textcolor{highlight}{83} \std{8}} \\
    & \texttt{overall}      & \texttt{50 \std{8}}                       & \texttt{22 \std{4}}                       & \texttt{\textcolor{highlight}{71} \std{5}} \\
}
\end{table}

\begin{table}[ht]
\NStepTable{50}{%
    \multirow{6}{*}{\texttt{humanoidmaze-giant-1B}}
    & \texttt{task1}        & \texttt{60 \std{9}}                       & \texttt{38 \std{16}}                      & \texttt{\textcolor{highlight}{89} \std{8}} \\
    & \texttt{task2}        & \texttt{80 \std{16}}                      & \texttt{75 \std{11}}                      & \texttt{\textcolor{highlight}{97} \std{3}} \\
    & \texttt{task3}        & \texttt{60 \std{19}}                      & \texttt{35 \std{9}}                       & \texttt{\textcolor{highlight}{89} \std{6}} \\
    & \texttt{task4}        & \texttt{64 \std{22}}                      & \texttt{46 \std{19}}                      & \texttt{\textcolor{highlight}{95} \std{3}} \\
    & \texttt{task5}        & \texttt{\textcolor{highlight}{96} \std{5}}& \texttt{\textcolor{highlight}{93} \std{6}}& \texttt{\textcolor{highlight}{96} \std{3}} \\
    & \texttt{overall}      & \texttt{72 \std{6}}                       & \texttt{57 \std{8}}                       & \texttt{\textcolor{highlight}{93} \std{3}} \\ \midrule
    \multirow{6}{*}{\texttt{puzzle-4x5-1B}}
    & \texttt{task1}        & \texttt{\textcolor{highlight}{100} \std{0}}& \texttt{\textcolor{highlight}{99} \std{1}}& \texttt{\textcolor{highlight}{100} \std{0}} \\
    & \texttt{task2}        & \texttt{\textcolor{highlight}{100} \std{0}}& \texttt{\textcolor{highlight}{97} \std{5}}& \texttt{\textcolor{highlight}{100} \std{0}} \\
    & \texttt{task3}        & \texttt{93 \std{20}}                      & \texttt{\textcolor{highlight}{98} \std{4}}& \texttt{\textcolor{highlight}{100} \std{1}} \\
    & \texttt{task4}        & \texttt{\textcolor{highlight}{97} \std{4}}& \texttt{\textcolor{highlight}{98} \std{4}}& \texttt{\textcolor{highlight}{100} \std{1}} \\
    & \texttt{task5}        & \texttt{83 \std{28}}                      & \texttt{91 \std{8}}                       & \texttt{\textcolor{highlight}{100} \std{1}} \\
    & \texttt{overall}      & \texttt{\textcolor{highlight}{95} \std{10}}& \texttt{\textcolor{highlight}{97} \std{2}}& \texttt{\textcolor{highlight}{100} \std{0}} \\ \midrule
    \multirow{6}{*}{\texttt{puzzle-4x6-1B}}
    & \texttt{task1}        & \texttt{\textcolor{highlight}{100} \std{0}}& \texttt{\textcolor{highlight}{98} \std{3}}& \texttt{\textcolor{highlight}{100} \std{0}} \\
    & \texttt{task2}        & \texttt{27 \std{8}}                       & \texttt{35 \std{28}}                      & \texttt{\textcolor{highlight}{90} \std{11}} \\
    & \texttt{task3}        & \texttt{68 \std{16}}                      & \texttt{91 \std{9}}                       & \texttt{\textcolor{highlight}{100} \std{0}} \\
    & \texttt{task4}        & \texttt{52 \std{14}}                      & \texttt{\textcolor{highlight}{92} \std{6}}& \texttt{\textcolor{highlight}{96} \std{2}} \\
    & \texttt{task5}        & \texttt{0 \std{0}}                        & \texttt{0 \std{1}}                        & \texttt{\textcolor{highlight}{30} \std{11}} \\
    & \texttt{overall}      & \texttt{49 \std{4}}                       & \texttt{63 \std{8}}                       & \texttt{\textcolor{highlight}{83} \std{4}} \\ \midrule
    \multirow{6}{*}{\texttt{cube-triple-100M}}
    & \texttt{single\_pnp}  & \texttt{65 \std{12}}                      & \texttt{76 \std{8}}                       & \texttt{\textcolor{highlight}{87} \std{6}} \\
    & \texttt{triple\_pnp}  & \texttt{45 \std{19}}                      & \texttt{32 \std{12}}                      & \texttt{\textcolor{highlight}{81} \std{7}} \\
    & \texttt{from\_stack}  & \texttt{42 \std{10}}                      & \texttt{32 \std{10}}                      & \texttt{\textcolor{highlight}{73} \std{9}} \\
    & \texttt{cycle}        & \texttt{16 \std{7}}                       & \texttt{6 \std{6}}                        & \texttt{\textcolor{highlight}{39} \std{10}} \\
    & \texttt{stack}        & \texttt{61 \std{14}}                      & \texttt{27 \std{8}}                       & \texttt{\textcolor{highlight}{70} \std{11}} \\
    & \texttt{overall}      & \texttt{46 \std{7}}                       & \texttt{35 \std{3}}                       & \texttt{\textcolor{highlight}{70} \std{5}} \\
}
\hfill
\NStepTable{100}{%
    \multirow{6}{*}{\texttt{humanoidmaze-giant-1B}}
    & \texttt{task1}        & \texttt{59 \std{12}}                      & \texttt{68 \std{10}}                      & \texttt{\textcolor{highlight}{87} \std{8}} \\
    & \texttt{task2}        & \texttt{89 \std{6}}                       & \texttt{\textcolor{highlight}{95} \std{4}}& \texttt{\textcolor{highlight}{97} \std{3}} \\
    & \texttt{task3}        & \texttt{57 \std{11}}                      & \texttt{52 \std{12}}                      & \texttt{\textcolor{highlight}{92} \std{5}} \\
    & \texttt{task4}        & \texttt{79 \std{11}}                      & \texttt{74 \std{15}}                      & \texttt{\textcolor{highlight}{94} \std{6}} \\
    & \texttt{task5}        & \texttt{90 \std{7}}                       & \texttt{\textcolor{highlight}{95} \std{7}}& \texttt{\textcolor{highlight}{97} \std{4}} \\
    & \texttt{overall}      & \texttt{75 \std{3}}                       & \texttt{77 \std{5}}                       & \texttt{\textcolor{highlight}{94} \std{2}} \\ \midrule
    \multirow{6}{*}{\texttt{puzzle-4x5-1B}}
    & \texttt{task1}        & \texttt{\textcolor{highlight}{100} \std{0}}& \texttt{\textcolor{highlight}{98} \std{3}}& \texttt{\textcolor{highlight}{100} \std{0}} \\
    & \texttt{task2}        & \texttt{\textcolor{highlight}{96} \std{6}}& \texttt{\textcolor{highlight}{99} \std{2}}& \texttt{\textcolor{highlight}{100} \std{1}} \\
    & \texttt{task3}        & \texttt{78 \std{28}}                      & \texttt{\textcolor{highlight}{96} \std{6}}& \texttt{\textcolor{highlight}{100} \std{1}} \\
    & \texttt{task4}        & \texttt{\textcolor{highlight}{97} \std{2}}& \texttt{\textcolor{highlight}{98} \std{2}}& \texttt{\textcolor{highlight}{99} \std{3}} \\
    & \texttt{task5}        & \texttt{74 \std{21}}                      & \texttt{91 \std{9}}                       & \texttt{\textcolor{highlight}{97} \std{3}} \\
    & \texttt{overall}      & \texttt{89 \std{11}}                      & \texttt{\textcolor{highlight}{96} \std{3}}& \texttt{\textcolor{highlight}{99} \std{1}} \\ \midrule
    \multirow{6}{*}{\texttt{puzzle-4x6-1B}}
    & \texttt{task1}        & \texttt{\textcolor{highlight}{100} \std{0}}& \texttt{\textcolor{highlight}{99} \std{2}}& \texttt{\textcolor{highlight}{100} \std{0}} \\
    & \texttt{task2}        & \texttt{42 \std{19}}                      & \texttt{54 \std{29}}                      & \texttt{\textcolor{highlight}{91} \std{15}} \\
    & \texttt{task3}        & \texttt{74 \std{20}}                      & \texttt{92 \std{6}}                       & \texttt{\textcolor{highlight}{100} \std{0}} \\
    & \texttt{task4}        & \texttt{42 \std{18}}                      & \texttt{81 \std{16}}                      & \texttt{\textcolor{highlight}{99} \std{2}} \\
    & \texttt{task5}        & \texttt{0 \std{0}}                        & \texttt{3 \std{4}}                        & \texttt{\textcolor{highlight}{31} \std{16}} \\
    & \texttt{overall}      & \texttt{51 \std{8}}                       & \texttt{66 \std{9}}                       & \texttt{\textcolor{highlight}{84} \std{6}} \\ \midrule
    \multirow{6}{*}{\texttt{cube-triple-100M}}
    & \texttt{single\_pnp}  & \texttt{60 \std{10}}                      & \texttt{\textcolor{highlight}{79} \std{9}}& \texttt{\textcolor{highlight}{83} \std{9}} \\
    & \texttt{triple\_pnp}  & \texttt{29 \std{10}}                      & \texttt{44 \std{7}}                       & \texttt{\textcolor{highlight}{57} \std{11}} \\
    & \texttt{from\_stack}  & \texttt{27 \std{13}}                      & \texttt{38 \std{9}}                       & \texttt{\textcolor{highlight}{51} \std{6}} \\
    & \texttt{cycle}        & \texttt{3 \std{4}}                        & \texttt{7 \std{2}}                        & \texttt{\textcolor{highlight}{16} \std{7}} \\
    & \texttt{stack}        & \texttt{37 \std{12}}                      & \texttt{18 \std{12}}                      & \texttt{\textcolor{highlight}{53} \std{13}} \\
    & \texttt{overall}      & \texttt{31 \std{5}}                       & \texttt{37 \std{5}}                       & \texttt{\textcolor{highlight}{52} \std{5}} \\
}
\end{table}

\clearpage
\subsection{Standard OGBench Tasks}
\label{sec:full_standard}
\begin{table}[htbp]
    \centering
    \caption{Full results on state-based OGBench tasks.}
    \label{tab:standard_performance}
    \resizebox{\textwidth}{!}{%
%
    }
\end{table}

\begin{table}[htbp]
    \centering
    \caption{Full results on pixel-based OGBench tasks.}
    \label{tab:visual_performance}
    \resizebox{\textwidth}{!}{%
        %
%
    }
\end{table}

\clearpage
\section{Hyperparameters}
\label{sec:hyperparameters}

\begin{table}[ht]
\centering
\caption{Common hyperparameters.}
\label{tab:hyperparameters_common}
\resizebox{\textwidth}{!}{
%
}
\end{table}

\begin{table}[ht]
\centering
\caption{Hyperparameters for long-horizon OGBench tasks.}
\label{tab:hyperparameters_ogbench}
\resizebox{\textwidth}{!}{
%
}
\end{table}

\begin{table}[ht]
\centering
\caption{Hyperparameters for standard OGBench tasks.}
\label{tab:hyperparameters_standard_ogbench}
\resizebox{\textwidth}{!}{
%
}
\end{table}

\begin{table}[ht]
\centering
\caption{Hyperparameters for CALVIN.}
\label{tab:hyperparameters_calvin}
\resizebox{\textwidth}{!}{
%
}
\end{table}

\begin{table}[h]
\centering
\caption{Task-specific hyperparameters for all methods. ($\alpha$: BC coefficient for reparameterized gradients, $N$: sample count for rejection sampling, $M$: subgoal count, $P$: random
goal distance, $\beta$: goal regularization weight, $\kappa$: expectile, $\lambda$: distance re-weighting factor.) For \method, \(\kappa\) denotes \(\kappa_{\mathrm{prop}}\); we fix \(\kappa_{\mathrm{dc}}=0.5\). For hyperparameters not listed here, we adopt the default values from \citet{eysenbach2022contrastive, park2023hiql, park2025ogbench, park2026transitive}.}
\label{tab:hyperparameters}
\resizebox{\textwidth}{!}{
\begin{tabular}{lccccccccc}
\toprule
\multirow{2}{*}{\texttt{Environment}} & \texttt{QRL} & \texttt{TDP} & \texttt{COE} & \texttt{TD} & \texttt{MC} & \texttt{TMD} & \{\texttt{TD-n},\trl,\method\} & \{\trl,\method\}  & \trl \\
 & $\alpha$ & $(\alpha,N),M,P$ & $(\alpha),\beta$ & $(\alpha,N)$ & $(\alpha,N)$ & $\alpha$ & $(\alpha,N)$ & $\kappa$ & $\lambda$ \\
\midrule
\texttt{humanoidmaze-giant-1B} & $0.001$ & $(3,\text{-}),8,1000$ & $1,0.5$ & $(0.3,\text{-})$ & $(0.3,\text{-})$ & - & \trl:\;$(0.1,\text{-})$ / \method:\;$(\text{-},32)$ & $0.7$ & $0$ \\
\texttt{puzzle-4x5-1B} & $3$ & $(\text{-},32),8,1000$ & $3,10$ & $(\text{-},32)$ & $(\text{-},32)$ & - & $(\text{-},32)$ & $0.7$ & $0$ \\
\texttt{puzzle-4x6-1B} & $3$ & $(\text{-},32),8,1000$ & $3,10$ & $(\text{-},32)$ & $(\text{-},32)$ & - & $(\text{-},32)$ & $0.7$ & $0$ \\
\texttt{cube-quadruple-100M} & $3$ & $(\text{-},4),8,1000$ & $3,10$ & $(\text{-},4)$ & $(\text{-},4)$ & - & $(\text{-},4)$ & $0.7$ & $0$ \\
\texttt{cube-octuple-1B} & $3$ & $(\text{-},4),8,1000$ & $3,10$ & $(\text{-},4)$ & $(\text{-},4)$ & - & $(\text{-},4)$ & $0.7$ & $0$ \\
\midrule
\texttt{pointmaze-large} & $0.0003$ & $(5,\text{-}),8,500$ & $1,1$ & $(10,\text{-})$ & $(2,\text{-})$ & $0.03$ & $(2,\text{-})$ & $0.7$ & $0.7$ \\
\texttt{antmaze-large} & $0.003$ & $(10,\text{-}),8,100$ & $3,10$ & $(0.7,\text{-})$ & $(0.7,\text{-})$ & $0.1$ & $(0.7,\text{-})$ & $0.7$ & $0$ \\
\texttt{antsoccer-arena} & $0.003$ & $(10,\text{-}),8,200$ & $0.3,1$ & $(0.3,\text{-})$ & $(0.3,\text{-})$ & $0.3$ & $(0.3,\text{-})$ & $0.7$ & $0.5$ \\
\texttt{humanoidmaze-medium} & $0.001$ & $(5,\text{-}),8,500$ & $0.1,3$ & $(0.1,\text{-})$ & $(0.1,\text{-})$ & $0.1$ & $(0.1,\text{-})$ & $0.7$ & $0$ \\
\texttt{humanoidmaze-large} & $0.001$ & $(5,\text{-}),8,500$ & $0.1,1$ & $(0.1,\text{-})$ & $(0.1,\text{-})$ & $0.1$ & $(0.1,\text{-})$ & $0.7$ & $0.1$ \\
\texttt{cube-single} & $0.3$ & $(5,\text{-}),8,500$ & $0.3,10$ & $(1,\text{-})$ & $(1,\text{-})$ & $3.0$ & $(1,\text{-})$ & $0.7$ & $0.7$ \\
\texttt{cube-double} & $0.3$ & $(5,\text{-}),8,500$ & $0.3,10$ & $(10,\text{-})$ & $(2,\text{-})$ & $3.0$ & $(2,\text{-})$ & $0.7$ & $1$ \\
\texttt{scene} & $0.3$ & $(1,\text{-}),16,200$ & $1,10$ & $(1,\text{-})$ & $(1,\text{-})$ & $3.0$ & $(1,\text{-})$ & $0.7$ & $1$ \\
\texttt{puzzle-3x3} & $0.3$ & $(5,\text{-}),8,500$ & $3,10$ & $(2,\text{-})$ & $(2,\text{-})$ & $3.0$ & $(2,\text{-})$ & $0.7$ & $0.5$ \\
\texttt{puzzle-4x4} & $0.3$ & $(5,\text{-}),8,500$ & $1,10$ & $(2,\text{-})$ & $(1,\text{-})$ & $3.0$ & $(1,\text{-})$ & $0.7$ & $2$ \\
\midrule
\texttt{visual-cube-single} & \text{0.3} & $(5,\text{-}),16,500$ & $0.3,10$ & \text{-} & $(2,\text{-})$ & $3.0$ & $(\text{-},8)$ & $0.7$ & \text{0} \\
\texttt{visual-cube-double} & \text{0.3} & $(5,\text{-}),16,500$ & $0.3,10$ & \text{-} & $(0.7,\text{-})$ & $3.0$ & $(\text{-},8)$ & $0.7$ & \text{0} \\
\texttt{visual-cube-triple} & \text{0.3} & $(5,\text{-}),16,500$ & $0.3,10$ & \text{-} & $(0.7,\text{-})$ & $3.0$ & $(\text{-},8)$ & $0.7$ & \text{0} \\
\texttt{visual-scene} & \text{0.3} & $(1,\text{-}),16,200$ & $1,10$ & \text{-} & $(1,\text{-})$ & $3.0$ & $(\text{-},8)$ & $0.7$ & \text{0} \\
\texttt{visual-puzzle-3x3} & \text{0.3} & $(5,\text{-}),16,500$ & $3,10$ & \text{-} & $(2,\text{-})$ & $3.0$ & $(\text{-},8)$ & $0.7$ & \text{0} \\
\texttt{visual-puzzle-4x4} & \text{0.3} & $(5,\text{-}),16,500$ & $1,10$ & \text{-} & $(1,\text{-})$ & $3.0$ & $(\text{-},8)$ & $0.7$ & \text{0} \\
\texttt{visual-puzzle-4x5} & \text{0.3} & $(5,\text{-}),16,500$ & $1,10$ & \text{-} & $(0.5,\text{-})$ & $3.0$ & $(\text{-},8)$ & $0.7$ & \text{0} \\
\texttt{visual-antmaze-medium} & \text{0.003} & $(10,\text{-}),8,100$ & $3,10$ & \text{-} & $(0.3,\text{-})$ & $0.1$ & $(\text{-},8)$ & $0.7$ & \text{0} \\
\texttt{visual-antmaze-large} & \text{0.003} & $(10,\text{-}),8,100$ & $3,10$ & \text{-} & $(0.3,\text{-})$ & $0.1$ & $(\text{-},8)$ & $0.7$ & \text{0} \\
\texttt{visual-antmaze-giant} & \text{0.003} & $(10,\text{-}),8,100$ & $3,10$ & \text{-} & $(0.3,\text{-})$ & $0.1$ & $(\text{-},8)$ & $0.7$ & \text{0} \\
\midrule
\texttt{CALVIN} & $0.003$ & \text{-} & \text{-} & \text{-} & $(\text{-},32)$ & $1.0$ & $(\text{-},32)$ & $0.7$ & \text{0} \\
\bottomrule
\end{tabular}
}
\end{table}

\end{document}